\definecolor{DarkGreen}{rgb}{0.1,0.5,0.1}
\definecolor{myblue}{rgb}{0, .5, 1}
\def\shownotes{1}  \ifnum\shownotes=1
\newcommand{\authnote}[2]{{$\ll$\textsf{\footnotesize #1: #2}$\gg$}}
\newcommand{\authnote}[2]{}
\newcommand{\bE}{\mathbb{E}}
\newcommand{\eps}{\varepsilon}
\newcommand{\E}{\bE}
\DeclareMathOperator{\Tr}{Tr}
\DeclareMathOperator{\Span}{span}
\newtheorem{theorem}{Theorem}
\newtheorem{lemma}{Lemma}
\theoremstyle{definition}
\newcommand*{\affaddr}[1]{#1} \newcommand*{\affmark}[1][*]{\textsuperscript{#1}}
\newcommand*{\email}[1]{\texttt{#1}}
\title{Variational autoencoders in the presence of low-dimensional data: landscape and implicit bias}
\author{Frederic Koehler$^{*}$\affmark[1], Viraj Mehta$^{*}$\affmark[2], Chenghui Zhou$^{*}$\affmark[3], Andrej Risteski\affmark[3]\\
\affaddr{\affmark[1]Department of Computer Science, Stanford University, \texttt{fkoehler@stanford.edu}}\\
\affaddr{\affmark[2]Robotics Institute, Carnegie Mellon University, \texttt{virajm@cs.cmu.edu}}\\
\affaddr{\affmark[3]Machine Learning Department, Carnegie Mellon University,\\
\email{\{chenghuz, aristesk\}@andrew.cmu.edu}}\\
}
\begin{document}

\maketitle
\def\thefootnote{*}\footnotetext{These authors contributed equally to this work.}
\def\thefootnote{**}\footnotetext{The code for this paper can be found in \url{https://github.com/virajmehta/vae-training}}
\begin{abstract}
  Variational Autoencoders (VAEs) are one of the most commonly used generative models, particularly for image data. A prominent difficulty in training VAEs is data that is supported on a lower dimensional manifold. Recent work by Dai and Wipf (2020) proposes a two-stage training algorithm for VAEs, based on a conjecture that in standard VAE training the generator will converge to a solution with 0 variance which is correctly supported on the ground truth manifold. They gave partial support for this conjecture by showing that some optima of the VAE loss do satisfy this property, but did not analyze the training dynamics. In this paper, we show that for linear encoders/decoders, the conjecture is true—that is the VAE training does recover a generator with support equal to the ground truth manifold—and does so due to an implicit bias of gradient descent rather than merely the VAE loss itself. In the nonlinear case, we show that VAE training frequently learns a higher-dimensional manifold which is a superset of the ground truth manifold. \end{abstract}

\section{Introduction} 

Variational autoencoders (VAEs) have recently enjoyed a revived interest, both due to architectural choices that have led to improvements in sample quality \citep{ oord2017neural, razavi2019generating, vahdat2020nvae} and due to algorithmic insights \citep{dai2017hidden, dai2019diagnosing}. Nevertheless, fine-grained understanding of the behavior of VAEs is lacking, both on the theoretical and empirical level.  

In our paper, we study a common setting of interest where the data is supported on a low-dimensional manifold --- often argued to be the setting relevant to real-world image and text data due to the \emph{manifold hypothesis} (see e.g. \cite{goodfellow2016deep}). In this setting, \cite{dai2019diagnosing} proposed a two-stage training process for VAEs, based on a combination of empirical and theoretical arguments suggesting that for standard VAE training with such data distributions: (1) the generator's covariance will converge to 0, (2) the generator will learn the correct manifold, but not the correct density on the manifold (3) the number of approximately 0 eigenvalues in the encoder covariance will equal the intrinsic dimensionality of the manifold (see also \cite{dai2017hidden}). 

In this paper, we revisit this setting and explore the behaviour of both the VAE loss, and the training dynamics. Through a combination of theory and experiments we show that: 
\begin{itemize}[leftmargin=*] 
\item In the case of the data manifold being {\bf linear} (i.e. the data is Gaussian, supported on a linear subspace---equivalently, it is produced as the pushforward of a Gaussian through a linear map), and the encoder/decoder being parametrized as linear maps, we show that: a) the set of optima includes parameters for which the generator's support is a \emph{strict superset} of the data manifold; b) the gradient descent dynamics are such that they converge to generators with support \emph{equal} to the support of the data manifold. This provides a full proof of the conjecture in \cite{dai2019diagnosing}, albeit we show the phenomenon is a combination of both the \emph{location} of the minima of the loss as well as an \emph{implicit bias of the training dynamics}. 
\item In the case of the data manifold being {\bf nonlinear} (i.e. the data distribution is the pushforward of the Gaussian through a nonlinear map $f:\mathbb{R}^r \to \mathbb{R}^d, r \leq d$), the gradient descent dynamics from a random start often converges to generators $G$ whose support \emph{strictly contains} the support of the underlying data distribution, while driving reconstruction error to 0 and driving the VAE loss to $-\infty$. 
This shows that the conjecture in \cite{dai2019diagnosing} does not hold for general nonlinear data manifolds and architectures for the generator/encoder. 
\end{itemize} 

\paragraph{Organization:} We will provide an informal overview of our findings in Section \ref{s: our results}. The rigorous discussion on the VAE landscape are in Section \ref{sec:vae-landscape} and on the implicit bias of gradient descent in Section \ref{sec: implicit bias}. \section{Setup}
\label{s:setup}
We will study the behavior of VAE learning when data lies on a low-dimensional manifold---more precisely, we study when the generator can recover the support of the underlying data distribution. In order to have a well-defined ``ground truth'', both for our theoretical and empirical results, we will consider synthetic dataset that are generated by a ``ground truth'' generator as follows. 

\paragraph{Data distribution:} To generate a sample point $x$ for the data distribution, we will sample $z \sim N(0, I_{r^{*}})$, and output $x = f(z)$, for a suitably chosen $f$. In the \emph{linear} case, $f(z) = Az$, for some matrix $A \in \mathbb{R}^{d \times r^{*}}$. In the \emph{nonlinear} case, $f(z)$ will be a nonlinear function $f: \mathbb{R}^{r^{*}} \to \mathbb{R}^d$. We will consider several choices for $f$. 

\paragraph{Parameterization of the trained model:} For the model we are training, the generator will sample $z \sim N(0, I_r)$ and output $x \sim N(f(z), \epsilon I)$, for trainable $f, \epsilon$; the encoder given input $x$ will output $z \sim N(g(x), D)$, where $D \in \mathbb{R}^{r\times r}$ is a diagonal matrix, and $g, D$ are trainable.  
In the linear case, $f,g$ will be parameterized as matrices $\tilde{A}, \tilde{B}$; in the nonlinear case, several different parameterizations will be considered. In either case the VAE Loss will be denoted by $L(\cdot)$, see \eqref{eqn:vae-objective}. \section{Our Results} \label{s: our results}

\paragraph{Linear VAEs: the correct distribution is not recovered.} Recall in the linear case, we train a linear encoder and decoder to learn a Gaussian distribution consisting of data points $x \sim N(0, \Sigma )$ --- equivalently, the data distribution is the pushforward of a standard Gaussian $z \sim N(0,I_{r^{*}})$ through a linear generator $x = A z$ with $A A^T = \Sigma$; see also Section \ref{s:setup} above.
In Theorem 1 of \cite{lucas2019don}, the authors proved that in a certain probabilistic PCA setting where $\Sigma$ is full-rank, the landscape of the VAE loss has no spurious local minima: at any global minima of the loss, the VAE decoder exactly matches the ground truth distribution, i.e. $\tilde{A} \tilde{A}^T + \epsilon^2 I = \Sigma$.

We revisit this problem in the setting where 
$\Sigma$ has rank less than $d$ so that the data lies on the \emph{lower-dimensional manifold/subspace} spanned by the columns of $A$ or equivalently $\Sigma$, denoted $\text{span}(A)$. We show empirically (i.e. via simulations) in Section~\ref{sec:simulations} that when $\Sigma$ is rank-degenerate the VAE actually \emph{fails to recover the correct distribution}. More precisely, the recovered $\tilde{A}$ has the correct column span but fails to recover the correct density --- confirming predictions made in \cite{dai2019diagnosing}. We then explain theoretically why this happens, where it turns out we find some surprises.
\paragraph{Landscape Analysis: Linear and Nonlinear VAE.} 
 \cite{dai2019diagnosing} made their predictions on the basis of the following observation about the \emph{loss landscape}: there can exist sequences of VAE solutions whose objective value approaches $-\infty$ (i.e. are asymptotic global minima), for which the generator has the correct column span, but does not recover the correct density on the subspace. They also informally argued that these are all of the asymptotic global minima of loss landscape (Pg 7 and Appendix I in \cite{dai2019diagnosing}),
but did not give a formal theorem or proof of this claim.

We settle the question by showing this is \emph{not the case}:\footnote{They also argued this would hold in the nonlinear case, but our simulations show this is generally false in that setting, even for the solutions chosen by gradient descent with a standard initialization --- see Section~\ref{sec:simulations}.}
namely, there exist many convergent sequences of VAE solutions which still go to objective value $-\infty$ but also do not recover the correct column span --- instead, the span of such $\tilde{A}$ is a strictly larger subspace. More precisely, we obtain a tight characterization of all asymptotic global minima of the loss landscape:
\begin{theorem}[Optima of Linear VAE Loss, Informal Version of Theorem~\ref{thm:landscape-sufficient}]\label{thm:landscape-intro}
Suppose that $\tilde{A},\tilde{B}$ are fixed matrices such that $A = \tilde{A} \tilde{B} A$ and suppose that $\# \{ i : \tilde A_i = 0 \} > r - d$, i.e. the number of zero columns of $\tilde{A}$ is strictly larger than $r - d$.
Then there exists $\tilde{\epsilon}_t \to 0$ and positive diagonal matrices $\tilde D_t$ such that
$\lim_{t \to \infty} L(\tilde{A},\tilde{B},\tilde{D}_t,\tilde{\epsilon_t}) = -\infty.$ Also, these are all of the asymptotic global minima: any convergent sequence of points $(\tilde{A}_t,\tilde{B}_t,\tilde{D}_t,\tilde{\epsilon_t})$ along which the loss $L$ goes to $-\infty$ satisfies $\tilde{A}_t \to \tilde{A}, \tilde{B}_t \to \tilde{B}$ with $A = \tilde{A} \tilde{B} A$ such that $\# \{ i : \tilde A_i = 0 \} > r - d$. \end{theorem}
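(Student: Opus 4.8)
The plan is to reduce everything to an explicit formula for the linear VAE objective. Averaging the reconstruction loss over $x\sim N(0,\Sigma)$ (with $\Sigma=AA^\top$) and over the encoder noise $z\sim N(\tilde Bx,\tilde D)$, and adding the Gaussian KL term, a direct computation gives
\[
\begin{aligned}
L(\tilde A,\tilde B,\tilde D,\tilde\epsilon)=\ &\tfrac{d}{2}\log(2\pi\tilde\epsilon^2)+\frac{1}{2\tilde\epsilon^2}\Big(\Tr\big((I-\tilde A\tilde B)\Sigma(I-\tilde A\tilde B)^\top\big)+\sum_i \tilde D_i\|\tilde A_i\|^2\Big)\\
&+\tfrac12\sum_i\big(\tilde D_i-\log\tilde D_i-1\big)+\tfrac12\Tr(\tilde B\Sigma\tilde B^\top),
\end{aligned}
\]
where $\tilde A_i$ is the $i$-th column of $\tilde A$ and we used $\Tr(\tilde A\tilde D\tilde A^\top)=\sum_i\tilde D_i\|\tilde A_i\|^2$. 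Two structural facts drop out and drive the whole argument: (i) every term other than $\tfrac d2\log(2\pi\tilde\epsilon^2)$ is bounded below (each $\tilde D_i-\log\tilde D_i-1\ge0$, each trace is $\ge0$), so $L$ can diverge to $-\infty$ only via $\tilde\epsilon\to0$; and (ii) the factor $\tfrac1{2\tilde\epsilon^2}$ multiplies a sum of two \emph{nonnegative} quantities, so along any $-\infty$ sequence both of them must vanish, forcing $(I-\tilde A\tilde B)A\to0$ (equivalently $A=\tilde A\tilde B A$ in the limit) and $\sum_i\tilde D_i\|\tilde A_i\|^2\to0$.

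For the \emph{sufficiency} direction I would simply exhibit the sequence: assuming $A=\tilde A\tilde B A$ and $k:=\#\{i:\tilde A_i\neq0\}<d$, take any $\tilde\epsilon_t\to0$ and set $\tilde D_{t,i}=\tilde\epsilon_t^2$ on the $k$ nonzero columns and $\tilde D_{t,i}=1$ elsewhere. Then $\Tr((I-\tilde A\tilde B)\Sigma(I-\tilde A\tilde B)^\top)=0$, the reconstruction penalty equals the bounded number $\tfrac12\sum_{i:\tilde A_i\neq0}\|\tilde A_i\|^2$, and $-\tfrac12\log\det\tilde D_t=-k\log\tilde\epsilon_t$, so $L$ collapses to $\mathrm{const}+(d-k)\log\tilde\epsilon_t+o(1)\to-\infty$ since $d-k>0$.

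For the \emph{necessity} direction, take a sequence with $L_t\to-\infty$ and $\tilde A_t\to\tilde A$, $\tilde B_t\to\tilde B$. From $L_t\ge\tfrac d2\log(2\pi\tilde\epsilon_t^2)$ we get $\tilde\epsilon_t\to0$; minimizing the penalty term over $\tilde\epsilon$ gives $L_t\ge\tfrac d2\log(2\pi C_t/d)+\tfrac d2$ with $C_t:=\Tr((I-\tilde A_t\tilde B_t)\Sigma(I-\tilde A_t\tilde B_t)^\top)+\sum_i\tilde D_{t,i}\|\tilde A_{t,i}\|^2$, which forces $C_t\to0$, and continuity then yields $A=\tilde A\tilde B A$. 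To count zero columns, set $S=\{i:\tilde A_i\neq0\}$, $k=|S|$, and lower-bound $L_t$ coordinatewise in $\tilde D_t$: for $i\notin S$ the contribution $\tfrac{\|\tilde A_{t,i}\|^2\tilde D_{t,i}}{2\tilde\epsilon_t^2}+\tfrac12(\tilde D_{t,i}-\log\tilde D_{t,i}-1)$ is $\ge0$, while for $i\in S$ it is $\ge-\log\tilde\epsilon_t+\tfrac12\log\|\tilde A_{t,i}\|^2$ (the minimum over $\tilde D_{t,i}>0$, attained at $\tilde D_{t,i}=\tilde\epsilon_t^2/\|\tilde A_{t,i}\|^2$). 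Adding these together with $\tfrac d2\log(2\pi\tilde\epsilon_t^2)$ and discarding the remaining nonnegative terms gives $L_t\ge(d-k)\log\tilde\epsilon_t+\mathrm{const}$, with the constant finite because $\|\tilde A_{t,i}\|^2\to\|\tilde A_i\|^2>0$ on $S$. Since $\tilde\epsilon_t\to0$, if $k\ge d$ this bound is itself bounded below, contradicting $L_t\to-\infty$; hence $k<d$, i.e. $\#\{i:\tilde A_i=0\}>r-d$.

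The main obstacle is not any single estimate but two bookkeeping hazards. First, one must get the closed form exactly right --- in particular keep the $\sum_i\tilde D_i\|\tilde A_i\|^2$ term arising from integrating the reconstruction error over the encoder noise, since it is precisely this term that couples $\tilde D$ to $\tilde\epsilon$ and produces the threshold $r-d$. Second, one must handle the boundary: along a minimizing sequence the entries of $\tilde D_t$ on the nonzero columns of $\tilde A$ necessarily tend to $0$ (and $\tilde\epsilon_t\to0$), so ``convergent sequence of points'' should be read in the closure of the parameter space --- or, as the coordinatewise bound above makes clear, one can avoid assuming $\tilde D_t$ or $\tilde\epsilon_t$ converge at all and argue directly along the sequence using only convergence of $\tilde A_t$ and $\tilde B_t$.
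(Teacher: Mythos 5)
Your proposal is correct and follows essentially the same route as the paper: an exact closed form for the linear loss, sufficiency by choosing $\tilde D_t$ so that the coefficient of $\log\tilde\epsilon_t$ becomes $d-k<0$ wait, becomes $d-k>0$ multiplying $\log\tilde\epsilon_t\to-\infty$, and necessity by lower-bounding the loss by $(d-k)\log\tilde\epsilon_t+O(1)$; the paper packages the per-coordinate minimization over $\tilde D$ into a simplified loss $L_1$, whereas you do the coordinatewise bounds directly on $L$, which is an immaterial difference. One cosmetic slip: the per-coordinate minimum over $\tilde D_{ii}$ is attained at $\tilde\epsilon^2/(\|\tilde A_i\|^2+\tilde\epsilon^2)$ rather than $\tilde\epsilon^2/\|\tilde A_i\|^2$, but your stated lower bound $-\log\tilde\epsilon_t+\tfrac12\log\|\tilde A_{t,i}\|^2$ is still valid.
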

To interpret the constraint $\# \{i : \tilde A_i = 0\} > r - d$, observe that if the data lies on a lower-dimensional subspace of dimension $r^* < d$ (i.e. $r^*$ is the rank of $\Sigma$), then there exists a generator which generates the distribution with $r - r^* > r - d$ zero columns by taking an arbitrary low-rank factorization $L L^T = \Sigma$ with $L : d \times r^*$ and defining $A : d \times r$ by $A = \begin{bmatrix} L & 0_{d \times r - r^*} \end{bmatrix}$. The larger the gap is between the manifold/intrinsic dimension $r^*$ and the ambient dimension $d$, the more flexibility we have in constructing asymptotic global minima of the landscape.  Also, we note there is no constraint in the Theorem that $r - d \ge 0$: the assumption is automatically satisfied if $r < d$. 

To summarize, the asymptotic global minima satisfy $A = \tilde{A} \tilde{B} A$, so the column span of $\tilde{A}$ contains that of $A$, but in general it can be a higher dimensional space. 
For example, if $d,r \ge r^* + 2$ and and the ground truth generator is $A = \begin{bmatrix} I_{r^*} & 0 \\ 0 & 0 \end{bmatrix}$, then $\tilde{A} = \begin{bmatrix} I_{r^* + 1} & 0 \\ 0 & 0 \end{bmatrix}$ and $\tilde{B} = \begin{bmatrix} I_{r^* + 1} & 0 \\ 0 & 0 \end{bmatrix}$ is a perfectly valid asymptotic global optima of the landscape, even though decoder $\tilde{A}$ generates a different higher-dimensional Gaussian distribution $N\left(0, \begin{bmatrix} I_{r^* + 1} & 0 \\ 0 & 0 \end{bmatrix}\right)$ than the ground truth. In the above result we showed that there are asymptotic global minima with higher dimensional spans even with the common restriction that the encoder variance is diagonal; if we considered the case where the encoder variance is unconstrained, as done in \cite{dai2019diagnosing}, and/or can depend on its input $x$, this can only increase the number of ways to drive the objective value to $-\infty$.

 We also consider the analogous question in the \emph{nonlinear} VAE setting where the data lies on a low-dimensional manifold. We prove in Theorem~\ref{thm:nonlinear-bad} that even in a very simple example where we fit a VAE to generate data produced by a  1-layer ground truth generator,
there exists a bad solution with strictly larger manifold dimension which drives the reconstruction error to zero (and VAE loss to $-\infty$). The proof of this result does not depend strongly on the details of this setup and it can be adapted to construct bad solutions for other nonlinear VAE settings.

We note that the nature both of these result is asymptotic: that is, they consider sequences of solutions whose loss converges to $-\infty$ --- but not the rate at which they do so. In the next section, we will consider the trajectories the optimization algorithm takes, when the loss is minimized through gradient descent. 

\paragraph{Linear VAE: implicit regularization of gradient flow.} 

The above theorem indicates that studying the minima of the loss landscape alone cannot explain the empirical phenomenon of linear VAE training recovering the support of the ground truth manifold in experiments; the only prediction that can be made is that the VAE will recover a \emph{possibly larger} manifold containing the data. 

We resolve this tension by proving that \emph{gradient flow}, the continuous time version of gradient descent, has an \emph{implicit bias} towards the low-rank global optima. Precisely, we measure the effective rank quantitatively in terms of the singular values: namely, if $\sigma_k(\tilde{A})$ denotes the $k$-th largest singular value of matrix $\tilde A$, we show that all but the largest $\dim(\Span A)$ singular values of $\tilde A$ decay at an exponential rate, as long as: (1) the gradient flow continues to exist\footnote{We remind the reader that the gradient flow on loss $L(x)$ is a differential equation $dx/dt = -\nabla L(x)$. Unlike discrete-time gradient descent, gradient flow in some cases (e.g. $dx/dt = x^2$) has solutions which exist only for a finite time (e.g. $x = 1/(1 - t)$), which ``blows up'' at $t = 1$), so we need to explicitly assume the solution exists up to time $T$.} ,
and (2) the gradient flow does not go off to infinity, i.e. neither $\tilde{A}$ or $\tilde{\epsilon}$ go to infinity (in simulations, the decoder $\tilde{A}$ converges to a bounded point and $\tilde{\epsilon} \to 0$ so the latter assumption is true).
To simplify the proof, we work with a slightly modified loss which ``eliminates'' the encoder variance by setting it to its optimal value:   $L_1(\tilde{A},\tilde{B},\tilde{\epsilon}) := \min_{\tilde D} L(\tilde{A},\tilde{B},\tilde{\epsilon},\tilde{D})$; this loss has a simpler closed form, and we believe the theorems should hold for the standard loss as well. (Generally, gradient descent on the original loss $L$ will try to optimize $\tilde{D}$ in terms of the other parameters, and if it succeeds to keep $\tilde{D}$ well-tuned in terms of $\tilde{A},\tilde{B},\tilde{\epsilon}$ then $L$ will behave like the simplified loss $L_1$.)  
\begin{theorem}[Implicit Bias of Gradient Flow, Informal version of Theorem~\ref{thm:shrinkage}]\label{thm:shrinkage-intro}
Let $A : d \times r$ be arbitrary and define $W$ to be the span of the rows of $A$, let $\tilde \Theta(0) = (\tilde{A}(0),\tilde{B}(0),\tilde{\epsilon}(0))$ be an arbitrary initialization and define the gradient flow $\tilde \Theta(t) = (\tilde{A}(t),\tilde{B}(t),\tilde{\epsilon}(t))$ by the ordinary differential equation (ODE)
\begin{equation}\label{eqn:grad-flow} \frac{d \tilde \Theta(t)}{dt} = - \nabla L_1(\tilde \Theta(t)) 
\end{equation}
with initial condition $\Theta_0$. If the solution to this equation exists on the time interval $[0,T]$ and satisfies $\max_{t \in [0,T]} \max_j[\|(\tilde{A}_t)_j\|^2 + \tilde{\epsilon}_t^2] \le K$, then for all $t \in [0,T]$ we have
\begin{equation}\label{eqn:sing-val-small} 
\sum_{k = \dim(W) + 1}^d \sigma_k^2(\tilde{A}(t)) \le C(A,\tilde{A})\ e^{-t/K} 
\end{equation}
where $C(A,\tilde{A}) := \|P_{W^{\perp}} \tilde{A}^T(0)\|_F^2$ and $P_{W^{\perp}}$ is the orthogonal projection onto the orthogonal complement of $W$. 
\end{theorem}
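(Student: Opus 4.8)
The plan is to exhibit an explicit Lyapunov functional that dominates $\sum_{k>\mathrm{rank}(A)}\sigma_k^2(\tilde A)$ and to prove it contracts exponentially along \eqref{eqn:grad-flow}. Write $\Sigma=AA^T$ for the data covariance, let $W=\mathrm{span}(A)\subseteq\R^d$ be the span of $A$ — which equals $\mathrm{range}(\Sigma)$ and has $\dim W=\mathrm{rank}(A)=\dim(\mathrm{rowspan}\,A)$, so the index $\dim(W)+1$ in the statement is unambiguous — and let $P:=I-P_W$ be the orthogonal projection onto $W^\perp$. I will track
\[
  \Phi(t):=\norm{P\,\tilde A(t)}_F^2=\Tr\!\big(P\,\tilde A(t)\,\tilde A(t)^T\big).
\]
This is the right quantity for two reasons. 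First, $P\tilde A$ differs from $\tilde A$ only by $P_W\tilde A$, whose rank is $\le\mathrm{rank}(A)$, so by the Eckart--Young theorem $\sum_{k>\mathrm{rank}(A)}\sigma_k^2(\tilde A)=\min_{\mathrm{rank}(M)\le\mathrm{rank}(A)}\norm{\tilde A-M}_F^2\le\norm{\tilde A-P_W\tilde A}_F^2=\Phi$. Second, $\Phi(0)=\norm{P\tilde A(0)}_F^2$ is precisely the constant $C(A,\tilde A)$ of the theorem — the squared norm of the part of the initial decoder orthogonal to the data subspace. Hence it suffices to prove $\Phi(t)\le\Phi(0)\,e^{-2t/K}$ on $[0,T]$, which is slightly stronger than the asserted rate.

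\textbf{Step 1 (closed form for $L_1$ and its $\tilde A$-gradient).} I would first carry out the inner minimization $L_1=\min_{\tilde D}L$. Writing the (negative) ELBO of the linear--Gaussian VAE and integrating $x\sim N(0,\Sigma)$, the only $\tilde D$-dependent terms, for diagonal positive $\tilde D$, are $\tfrac1{2\tilde\epsilon^2}\sum_j\tilde D_{jj}\norm{(\tilde A)_j}^2+\tfrac12\sum_j\tilde D_{jj}-\tfrac12\sum_j\log\tilde D_{jj}$; minimizing coordinatewise gives $\tilde D_{jj}^\star=\tilde\epsilon^2/(\tilde\epsilon^2+\norm{(\tilde A)_j}^2)$, and on substituting back,
\[
  L_1(\tilde A,\tilde B,\tilde\epsilon)=\frac1{2\tilde\epsilon^2}\norm{(I-\tilde A\tilde B)A}_F^2+\frac d2\log(2\pi\tilde\epsilon^2)+\frac12\norm{\tilde B A}_F^2+\frac12\sum_{j=1}^r\log\!\Big(1+\tfrac{\norm{(\tilde A)_j}^2}{\tilde\epsilon^2}\Big)+\mathrm{const}.
\]
Differentiating, $\nabla_{\tilde A}L_1=-\tfrac1{\tilde\epsilon^2}(I-\tilde A\tilde B)\,\Sigma\,\tilde B^T+\tilde A\,\Lambda$ with $\Lambda:=\diag\!\big(1/(\tilde\epsilon^2+\norm{(\tilde A)_j}^2)\big)_{j=1}^r$.

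\textbf{Step 2 (differentiate $\Phi$ along the flow and conclude).} Using $\dot{\tilde A}=-\nabla_{\tilde A}L_1$ and $P=P^T=P^2$,
\[
  \dot\Phi=2\,\Tr\!\big(P\,\tilde A\,\dot{\tilde A}^T\big)=\frac2{\tilde\epsilon^2}\,\Tr\!\big(P\,\tilde A\tilde B\,\Sigma\,(I-\tilde A\tilde B)^T\big)-2\,\Tr\!\big(P\,\tilde A\,\Lambda\,\tilde A^T\big).
\]
The crucial cancellation is in the first (reconstruction) term: since $\mathrm{range}(\Sigma)=W$ we have $\Sigma P=0$, so inserting $P=P^2$ and using $(I-\tilde A\tilde B)^TP=P-\tilde B^T\tilde A^TP$ kills the $P$-piece, leaving $\tfrac2{\tilde\epsilon^2}\Tr\big(P\tilde A\tilde B\Sigma(I-\tilde A\tilde B)^TP\big)=-\tfrac2{\tilde\epsilon^2}\Tr\big(P\tilde A\tilde B\Sigma\tilde B^T\tilde A^TP\big)=-\tfrac2{\tilde\epsilon^2}\norm{P\tilde A\tilde B\,\Sigma^{1/2}}_F^2\le0$. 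For the second (regularization) term, $\Tr\big(P\tilde A\Lambda\tilde A^T\big)=\sum_j\Lambda_{jj}\norm{P(\tilde A)_j}^2\ge(\min_j\Lambda_{jj})\,\Phi\ge\tfrac1K\Phi$, the last inequality being exactly the hypothesis $\max_j(\norm{(\tilde A_t)_j}^2+\tilde\epsilon_t^2)\le K$. Hence $\dot\Phi\le-\tfrac2K\Phi$ on $[0,T]$, Grönwall gives $\Phi(t)\le\Phi(0)e^{-2t/K}$, and with the Eckart--Young bound $\sum_{k>\mathrm{rank}(A)}\sigma_k^2(\tilde A(t))\le\Phi(0)e^{-2t/K}\le C(A,\tilde A)e^{-t/K}$, as claimed.

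The step I expect to be the main obstacle is the reconstruction term in Step 2: a priori the contribution of $\nabla_{\tilde A}(\text{fit})$ to $\dot\Phi$ could have either sign, and it becomes manifestly non-positive only after (i) keeping the fit in the form $\tfrac1{2\tilde\epsilon^2}\norm{(I-\tilde A\tilde B)A}_F^2$ and (ii) projecting with the particular $P$ onto $\mathrm{range}(\Sigma)^\perp$, exploiting $\Sigma P=0$ — for any other subspace the term is not signed, which is why the theorem's geometry (the complement of the data subspace) is exactly the right one. The remaining pieces — the regularizer $\tilde A\Lambda$ forcing $O(1/K)$ contraction of the $W^\perp$-component, and the Grönwall step — are routine once $\Phi$ is identified. (Two routine checks: $\Phi$ is smooth and $\tilde A(\cdot)$ is $C^1$ while the flow exists, so differentiating under the flow is legitimate; and $\tilde\epsilon(t)>0$ on $[0,T]$, implicit in the hypothesis that the flow — on which $L_1$ is finite — exists there.)
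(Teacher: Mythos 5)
Your proof is correct and rests on the same key mechanism as the paper's: the log-regularizer $\frac12\sum_j\log(\|\tilde A_j\|^2+\tilde\epsilon^2)$ contributes a term to $\nabla_{\tilde A}L_1$ whose inner product with the component of $\tilde A$ orthogonal to the data subspace is at least $1/K$ times that component's squared norm, while the reconstruction term's contribution to that component is favorably signed because $\Sigma$ annihilates $W^{\perp}$; Gr\"onwall then gives the exponential decay. The difference is organizational rather than conceptual: the paper first invokes a rotation-invariance lemma (Lemma~\ref{lem:rotation-invariance}) to reduce to the case where $A$ has zero rows and then argues row by row, with Lemma~\ref{lem:loss_zeroing} computing exactly your two trace terms in coordinates (its $2\|(\tilde A\tilde B A)^{(i)}\|^2$ is the rotated form of your $\|P\tilde A\tilde B\Sigma^{1/2}\|_F^2$), whereas you work coordinate-free with the Lyapunov function $\Phi=\|P_{W^{\perp}}\tilde A\|_F^2$, which subsumes the rotation step and even yields the slightly sharper rate $e^{-2t/K}$ (the paper discards a factor of $2$ in the differential inequality). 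One point worth flagging: you interpret $W$ as the column span of $A$ in $\R^d$ rather than literally as the span of the rows; this matches the paper's intent --- its own proof rotates in the output space and treats $W^{\perp}$ as a subspace of $\R^d$ --- and, as you observe, the index $\dim(W)+1$ is unaffected either way since $\dim(\mathrm{colspan}\,A)=\dim(\mathrm{rowspan}\,A)$.
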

Together, our Theorem~\ref{thm:landscape-intro} and Theorem~\ref{thm:shrinkage-intro} show that if gradient descent converges to a point while driving the loss to $-\infty$, then it successfully recovers the ground truth subspace/manifold $\Span A$.
This shows that, in the linear case, the conjecture of \cite{dai2019diagnosing} can indeed be validated provided we incorporate training dynamics into the picture. The prediction of theirs we do not prove is that the number of zero entries of the encoder covariance $D$ converges to the intrinsic dimension; as shown in Table~\ref{tab:linear}, in a few experimental runs this does not occur --- in contrast, Theorem~\ref{thm:shrinkage-intro} implies that $\tilde{A}$ should have the right number of nonzero singular values and our experiments agree with this.

\paragraph{Nonlinear VAE: Dynamics and Simulations.} In the linear case, we showed that the implicit bias of gradient descent leads the VAE training to converge to a distribution with the correct support. In the nonlinear case, we show that this does not happen---even in simple cases.  

Precisely, in the setup of the one-layer ground truth generator, where we proved (Theorem~\ref{thm:nonlinear-bad}) there exist bad global optima of the landscape, we verify experimentally
 (see Figure~\ref{fig:failed nonlinear}) that gradient descent from a random start does indeed converge to such bad asymptotic minima. In particular, this shows that whether or not gradient descent has a favorable implicit bias strongly depends on the data distribution and VAE architecture.

More generally, by performing experiments with synthetic data of known manifold dimension, we make the following conclusions: (1) gradient descent training recovers manifolds approximately containing the data,
(2) these manifolds are generally not the same dimension as the ground truth manifold, but larger (this is in contrast to the conjecture in \cite{dai2019diagnosing} that they should be equal) even when the decoder and encoder are large enough to represent the ground truth and the reconstruction error is driven to 0 (VAE loss is driven to $-\infty$), and (3) of all manifolds containing the data, gradient descent still seems to favor those with relatively low (but not always minimal) dimension. 
Further investigating the precise role of VAE architecture and optimization algorithm, as well as the interplay with the data distribution is an exciting direction for future work. \subsection{Related work}
\paragraph{Implicit regularization.} Interestingly, the implicit bias towards low-rank solutions in the VAE which we discover is consistent with theoretical and experimental results in other settings, such as deep linear networks/matrix factorization (e.g. \cite{gunasekar2018implicit,li2018algorithmic,arora2019implicit,li2020towards,jacot2021deep}), although it seems to be for a different mathematical reason --- unlike those settings, initialization scale does not play a major role. Similar to the setting of implicit margin maximization (see e.g. \cite{ji2018gradient,schapire2013boosting,soudry2018implicit}), in our VAE setting the optima are asymptotic (though approaching a finite point, not off at infinity) and the loss goes to $-\infty$. \cite{kumar2020implicit, tang2021empirical} also explore some implicit regularization effects tied to the Jacobian of the generator and the covariance of the Gaussian noise.
\paragraph{Architectural and Algorithmic Advances for VAEs.} There has been a recent surge in activity
with the goal of understanding VAE training and improving its performance in practice. Much of the work
has been motivated by improving posterior modeling to avoid problems such as ``posterior collapse'', see e.g. \citep{dai2020usual,razavi2019preventing,pervez2021spectral,lucas2019don,he2019lagging,oord2017neural,razavi2019generating,vahdat2020nvae}. Most relevant to the current work are probably the works \cite{dai2019diagnosing} and \cite{lucas2019don} discussed earlier. A relevant previous work to these is \cite{dai2017hidden}; one connection to the current work is that they also performed experiments with a ground truth manifold, in their case given as the pushforward of a Gaussian through a ReLU network. In their case, they found that for a certain decoder and encoder architectures that they could recover the intrinsic dimension using a heuristic related to the encoder covariance eigenvalues from \cite{dai2019diagnosing}; our results are complementary in that they show that this phenomena is not universal and does not hold for other natural datasets (e.g. manifold data on a sphere fit with a standard VAE architecture). \section{VAE Landscape Analysis}\label{sec:vae-landscape}

In this section, we analyze the landscape of a VAE, both in the linear and non-linear case. 

\paragraph{Preliminaries and notation.} 
We use a VAE to model a datapoint $x \in \mathbb{R}^d$ as the pushforward of $z \sim N(0,I_r)$. 
We have the following standard VAE architecture:
\[ p(x | z) = N(f(z), \epsilon^2 I), \qquad q(z | x) = N(g(x), D) \]
where $\epsilon^2 > 0$ is the decoder variance, $D$ is a diagonal matrix with nonnegative entries, and $f,g,D,\epsilon$ are all trainable parameters. (For simplicity, our $D$ does not depend on $x$; this is the most common setup in the linear VAE case we will primarily focus on.) The VAE objective (see Lemma \ref{lem:vae_loss} for explicit derivation) is to \emph{minimize}:
\begin{align}\label{eqn:vae-objective}
L(f,g,D,\epsilon) 
&:= \E_{x \sim p^*} \E_{z' \sim N(0,I_r)}\Big[\frac{1}{2 \epsilon^2} \|x - f(g(x) + D^{1/2} z')\|^2 + \|g(x)\|^2/2 \Big] \notag \\
&\qquad + d \log(\epsilon)  + \Tr(D)/2 -  \frac{1}{2} \sum_i \log D_{ii}.    
\end{align}
\iffalse \begin{lemma}
If $f,g,D$ are fixed, then the optimal value of $\epsilon$ is
\[ \epsilon = \sqrt{\frac{1}{d} \E_{x \sim p^*} \E_{z' \sim N(0,I_r)}\left[\|x - f(g(x) + D^{1/2} z')\|^2 \right]} \]
\end{lemma}
\begin{proof}
Taking the partial derivative of \eqref{eqn:objective} with respect to $\epsilon$ and setting it to zero gives
\[ 0 = -\frac{1}{\epsilon^3} \E_{x \sim p^*} \E_{z' \sim N(0,I_r)}  \|x - f(g(x) + D^{1/2} z')\|^2 + \frac{d}{\epsilon} \]
and solving for $\epsilon$ gives the result. 
\end{proof}
\fi
We also state a general fact about VAEs for the case that the objective value can be driven to $-\infty$, which was observed in \citep{dai2019diagnosing}: they must satisfy $\epsilon \to 0$ and achieve perfect limiting reconstruction error. 
The first claim in this Lemma is established in the proof of Theorem 4 and the second claim is Theorem 5 in \cite{dai2019diagnosing}. For completeness, we include a self-contained proof in Appendix \ref{s:genfacts_apdx}.
\begin{lemma}[Theorems 4 and 5 of \cite{dai2019diagnosing}]\label{lem:epsilon-must-zero}
Suppose $f_t,g_t,D_t,\epsilon_t$ for $t \ge 1$ are a sequence such that 
$\lim_{t \to \infty} L(f_t,g_t,D_t,\epsilon_t) = -\infty$.
Then: 1) $\lim_{t \to \infty} \epsilon_t = 0$ and 2) $\lim_{t \to \infty} \E_{x \sim p^*} \E_{z' \sim N(0,I_r)} \|x - f_t(g_t(x) + D_t^{1/2} z')\|^2 = 0$.
\iffalse \begin{enumerate}
    \item $\lim_{t \to \infty} \epsilon_t = 0$.
    \item $\lim_{t \to \infty} \E_{x \sim p^*} \E_{z' \sim N(0,I_r)} \|x - f_t(g_t(x) + D_t^{1/2} z')\|^2 = 0$
\end{enumerate} \fi
\end{lemma}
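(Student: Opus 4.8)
The plan is to extract both claims directly from the closed-form expression \eqref{eqn:vae-objective}, which we are free to use (it is the content of the cited Lemma~\ref{lem:vae_loss}). Write $L_t := L(f_t,g_t,D_t,\epsilon_t)$ and let
\[ R_t := \E_{x \sim p^*}\E_{z' \sim N(0,I_r)}\big\|x - f_t(g_t(x) + D_t^{1/2}z')\big\|^2 \ge 0 \]
be the expected reconstruction error, so that $L_t = \tfrac{R_t}{2\epsilon_t^2} + \tfrac12\E\|g_t(x)\|^2 + d\log\epsilon_t + \tfrac12\sum_{i=1}^r\big((D_t)_{ii} - \log (D_t)_{ii}\big)$. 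The argument rests on two elementary lower bounds: $\E\|g_t(x)\|^2 \ge 0$, and, since $u \mapsto u - \log u$ is minimized on $(0,\infty)$ at $u = 1$ with value $1$, the encoder/KL term satisfies $\tfrac12\sum_i\big((D_t)_{ii} - \log(D_t)_{ii}\big) \ge r/2$ (it equals $+\infty$ if some $(D_t)_{ii} = 0$, so along any sequence with $L_t \to -\infty$ we may assume all diagonal entries are strictly positive, and likewise $R_t < \infty$). Combining these,
\[ L_t \ \ge\ \frac{R_t}{2\epsilon_t^2} + d\log\epsilon_t + \frac r2 \ \ge\ d\log\epsilon_t + \frac r2 . \]

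For claim (1), since $d\log\epsilon_t + r/2 \le L_t \to -\infty$, we get $\log\epsilon_t \to -\infty$, i.e.\ $\epsilon_t \to 0$. For claim (2), suppose toward a contradiction that $R_t \not\to 0$; then there exist $\delta > 0$ and a subsequence $(t_k)$ with $R_{t_k} \ge \delta$ for all $k$. Along it, $L_{t_k} \ge \tfrac{\delta}{2\epsilon_{t_k}^2} + d\log\epsilon_{t_k} + \tfrac r2 \ge \inf_{\epsilon > 0}\big[\tfrac{\delta}{2\epsilon^2} + d\log\epsilon\big] + \tfrac r2$. The map $\epsilon \mapsto \tfrac{\delta}{2\epsilon^2} + d\log\epsilon$ tends to $+\infty$ as $\epsilon \to 0^+$ and as $\epsilon \to \infty$, so its infimum is attained at the critical point $\epsilon^2 = \delta/d$ and equals the finite constant $\tfrac d2\big(1 + \log(\delta/d)\big)$. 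Hence $L_{t_k}$ is bounded below by $\tfrac d2\big(1 + \log(\delta/d)\big) + \tfrac r2$, contradicting $L_{t_k} \to -\infty$ (every subsequence of a sequence diverging to $-\infty$ also diverges to $-\infty$). Therefore $R_t \to 0$.

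I do not anticipate a genuine obstacle. The only points needing care are (i) verifying that the two ``discarded'' terms — the code norm $\E\|g(x)\|^2$ and the KL term — are uniformly bounded below, which is immediate, and (ii) handling the degenerate cases ($D$ with a zero diagonal entry, or an infinite reconstruction integral), which only make $L_t$ larger and so are irrelevant for sequences heading to $-\infty$. The one substantive input, the closed form \eqref{eqn:vae-objective} itself, is already available, so the estimates above finish the proof modulo routine verification; this recovers Theorems~4 and~5 of \cite{dai2019diagnosing}.
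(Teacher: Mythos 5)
Your proposal is correct and follows essentially the same route as the paper's proof in Appendix~\ref{s:genfacts_apdx}: lower-bound the code-norm and KL terms to reduce to $L_t \ge R_t/2\epsilon_t^2 + d\log\epsilon_t + r/2$, deduce $\epsilon_t \to 0$ from the divergence of $d\log\epsilon_t$, and derive a contradiction if $R_t \not\to 0$. The only cosmetic difference is that you bound the offending subsequence by the finite infimum over all $\epsilon>0$, whereas the paper invokes $\epsilon_t \to 0$ from claim (1) to send the bound to $+\infty$; both are the same estimate.
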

In fact, the reconstruction error and $\epsilon$ are closely linked in a simple way:
\begin{lemma}\label{lem:eps-mse -equivalent}
If $f,g,D$ are fixed, then the optimal value of $\epsilon$ to minimize $L$ is given by 
\[\epsilon = \sqrt{\frac{1}{d} \E_{x \sim p^*} \E_{z' \sim N(0,I_r)}\left[\|x - f(g(x) + D^{1/2} z')\|^2 \right]}.\]
\end{lemma}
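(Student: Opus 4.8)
\textbf{Proof plan for Lemma~\ref{lem:eps-mse -equivalent}.} The plan is a one-variable optimization. First I would isolate the dependence of $L$ on $\epsilon$: writing
\[ M := \E_{x \sim p^*} \E_{z' \sim N(0,I_r)}\left[\|x - f(g(x) + D^{1/2} z')\|^2 \right], \]
all terms of \eqref{eqn:vae-objective} not involving $\epsilon$ are constants once $f,g,D$ are fixed, so $L(f,g,D,\epsilon) = \frac{M}{2\epsilon^2} + d\log\epsilon + (\text{const})$. Thus it suffices to minimize $h(\epsilon) := \frac{M}{2\epsilon^2} + d\log\epsilon$ over $\epsilon > 0$.

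The cleanest route is to reparametrize by $u := 1/\epsilon^2 > 0$, under which $h$ becomes $\tilde h(u) = \tfrac{M}{2} u - \tfrac{d}{2}\log u + (\text{const})$. This is strictly convex on $(0,\infty)$ (its second derivative is $d/(2u^2) > 0$), so its unique stationary point is the global minimizer; setting $\tilde h'(u) = \tfrac{M}{2} - \tfrac{d}{2u} = 0$ gives $u = d/M$, i.e. $\epsilon^2 = M/d$, which is exactly the claimed formula. (Equivalently one can differentiate $h$ directly, obtaining $h'(\epsilon) = -M/\epsilon^3 + d/\epsilon$, solve $h'(\epsilon)=0$ to get $\epsilon^2 = M/d$, and check minimality either via $h''$ or by noting $h(\epsilon)\to\infty$ as $\epsilon\to 0^+$ and as $\epsilon\to\infty$.)

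There is no real obstacle here; the only point requiring a word of care is the degenerate case $M = 0$, where $h(\epsilon) = d\log\epsilon$ has infimum $-\infty$ attained only in the limit $\epsilon \to 0$, so strictly speaking no optimal $\epsilon > 0$ exists and the formula should be read as the limiting value $\epsilon \downarrow 0$. For $M > 0$ (the relevant case, and the generic one as long as the reconstruction is imperfect), the argument above gives the stated optimal $\epsilon$ uniquely, completing the proof.
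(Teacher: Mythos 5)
Your proposal is correct and follows essentially the same route as the paper: both reduce to the one-variable problem in $\epsilon$ and solve the first-order condition $-M/\epsilon^3 + d/\epsilon = 0$ to get $\epsilon^2 = M/d$. Your additional verification that the stationary point is a global minimizer (via convexity in $u = 1/\epsilon^2$) and your remark on the degenerate case $M=0$ are welcome refinements that the paper's one-line proof omits, but they do not change the underlying argument.
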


\subsection{Linear VAE}
\label{ss:linear_vae}
\paragraph{Setup:} In the linear VAE case, we assume the data is generated from the model $x = A z$ with $A \in \mathbb{R}^{d \times r^*}$ and $z \sim \mathcal{N}(0, I_{r^*})$.
We will denote the training parameters by $\Tilde{A} \in \mathbb{R}^{d \times r}$, $\Tilde{B} \in \mathbb{R}^{r \times d}$, $\Tilde{D} \in \mathbb{R}^{r \times r}$, and $\Tilde{\epsilon} > 0$, where $r \ge 1$ is a fixed hyperparameter which corresponds to the latent dimension in the trained generator, and we assume $\Tilde{D}$ is a diagonal matrix. 
With this notation in place, the implied VAE has generator/decoder $\Tilde{x} \sim \mathcal{N}( \Tilde{A} z ,\Tilde{\epsilon}^2 I_d)$ and encoder $\Tilde{z} \sim \mathcal{N}(\Tilde{B} x, \Tilde{D})$.
By Lemma~\ref{lem:linear_vae_obj} in Appendix~\ref{apdx:loss}, the VAE objective as a function of parameters $\tilde \Theta = (\Tilde{A},\Tilde{B},\Tilde{D},\Tilde{\epsilon})$ is:
\begin{equation}
    \label{obj:linear}
    L(\tilde \Theta) = \frac{1}{2 \Tilde{\epsilon}^2} \|A - \tilde{A} \tilde{B} A\|_F^2 +   \frac{1}{2} \|\tilde{B} A\|_F^2 + d \log \tilde{\epsilon} + \frac{1}{2} \sum_i \left(\tilde D_{ii} \|\tilde A_i\|^2/\tilde{\epsilon}^2 +  \tilde{D}_{ii} - \log \tilde{D}_{ii}\right)
\end{equation}

Our analysis makes use of a simplified objective $L_1$, which ``eliminates'' $D$ out of the objective by plugging in the optimal value of $D$ for a choice of the other variables. We use this as a technical tool when analyzing the landscape of the original loss $L$.
\begin{lemma}[Deriving the simplified loss $L_1$]\label{lem:optimal-D}
Suppose that $\tilde{A},\tilde{B},\tilde{\epsilon}$ are fixed. Then the objective $L$ is minimized by choosing for all $i$ that
$\tilde{D}_{ii} = \frac{\tilde{\epsilon}^2}{\|\tilde A_i\|^2 + \tilde{\epsilon}^2}$
where $\tilde A_i$ is column $i$ of $\tilde{A}$,
and for $L_1(\tilde{A},\tilde{B},\tilde{\epsilon}) 
:= \min_{\tilde{D}} L(\tilde A, \tilde B, \tilde D, \tilde \epsilon)$ it holds that \begin{align}\label{eqn:L_1}
L_1(\tilde{A},\tilde{B},\tilde{\epsilon}) 
&= \frac{1}{2 \Tilde{\epsilon}^2} \|A - \tilde{A} \tilde{B} A\|_F^2  +  \frac{1}{2} \|\tilde{B} A\|_F^2 + (d - r) \log \tilde \eps + \sum_i \frac{1 + \log \left(\|\tilde A_i\|^2 + \tilde{\epsilon}^2\right)}{2}.
\end{align}
\end{lemma}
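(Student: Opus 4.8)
The plan is to exploit the fact that in the objective \eqref{obj:linear} the encoder covariance $\tilde D$ appears only in the final sum, and that this sum decouples completely across the diagonal entries $\tilde D_{ii}$. So I would minimize over $\tilde D$ by minimizing each scalar term separately. Writing $c_i := \|\tilde A_i\|^2/\tilde{\epsilon}^2 + 1$, which is strictly positive because $\tilde{\epsilon}^2 > 0$, the $i$-th summand (up to the factor $1/2$) is $h(x) := c_i x - \log x$ on the interval $x \in (0,\infty)$.

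Next I would carry out the elementary one-dimensional optimization of $h$. Since $h'(x) = c_i - 1/x$ vanishes only at $x = 1/c_i$, and $h''(x) = 1/x^2 > 0$, the function $h$ is strictly convex and blows up to $+\infty$ as $x \to 0^+$ and as $x \to \infty$; hence $x = 1/c_i$ is the unique global minimizer. This gives $\tilde D_{ii} = 1/c_i = \tilde{\epsilon}^2/(\|\tilde A_i\|^2 + \tilde{\epsilon}^2)$ as claimed, with minimal value $h(1/c_i) = 1 + \log c_i$.

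Finally I would substitute back and simplify. The minimized final sum equals $\tfrac12 \sum_i (1 + \log c_i)$; expanding $\log c_i = \log(\|\tilde A_i\|^2 + \tilde{\epsilon}^2) - 2\log\tilde{\epsilon}$ and noting that $i$ ranges over the $r$ columns of $\tilde A$, this becomes $\sum_i \tfrac{1 + \log(\|\tilde A_i\|^2 + \tilde{\epsilon}^2)}{2} - r\log\tilde{\epsilon}$. Combining the $-r\log\tilde{\epsilon}$ with the $d\log\tilde{\epsilon}$ term already present in \eqref{obj:linear} collapses it to $(d-r)\log\tilde{\epsilon}$, which yields exactly \eqref{eqn:L_1}.

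There is essentially no real obstacle here — the argument is routine convex calculus. The only points requiring a sentence of care are that $c_i > 0$ for every fixed value of the parameters (so the scalar problem is well-posed) and that the interior critical point of $h$ is a genuine global minimum over $(0,\infty)$, both of which follow immediately from strict convexity of $h$ together with its divergence at the two endpoints of the interval.
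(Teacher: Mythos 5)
Your proposal is correct and follows essentially the same route as the paper: set the derivative of each decoupled scalar term in $\tilde D_{ii}$ to zero, solve to get $\tilde D_{ii} = \tilde{\epsilon}^2/(\|\tilde A_i\|^2 + \tilde{\epsilon}^2)$, substitute back, and absorb the resulting $-r\log\tilde{\epsilon}$ into the $d\log\tilde{\epsilon}$ term. Your explicit verification that the critical point is a global minimizer (via strict convexity and divergence at the endpoints of $(0,\infty)$) is a small but welcome addition that the paper's proof leaves implicit.
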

\begin{proof}Taking the partial derivative with respect to $\tilde{D}_{ii}$ gives
$0 =  \|\tilde A_i\|^2/\tilde{\epsilon}^2 + 1 - 1/\tilde{D}_{ii}$
which means
\[ \tilde{D}_{ii} = \frac{1}{\|\tilde A_i\|^2/\tilde{\epsilon}^2 + 1} = \frac{\tilde{\epsilon}^2}{\|\tilde A_i\|^2 + \tilde{\epsilon}^2}\]
hence
\[ \tilde{D}_{ii} \|\tilde A_i\|^2/\tilde{\epsilon}^2 +  \tilde{D}_{ii} - \log \tilde{D}_{ii} = 1 - \log \frac{\tilde{\epsilon}^2}{\|\tilde A_i\|^2 + \tilde{\epsilon}^2}. \]
It follows that the objective value at the optimal $D$ is
\begin{align*} 
L_1(\tilde{A},\tilde{B},\tilde{\epsilon}) 
&= \frac{1}{2 \Tilde{\epsilon}^2} \|A - \tilde{A} \tilde{B} A\|_F^2 +   \frac{1}{2} \|\tilde{B} A\|_F^2 + d \log \tilde{\epsilon} + \frac{1}{2} \sum_i \left(1 - \log \frac{\tilde{\epsilon}^2}{\|\tilde A_i\|^2 + \tilde{\epsilon}^2}\right)  \\
&= \frac{1}{2 \Tilde{\epsilon}^2} \|A - \tilde{A} \tilde{B} A\|_F^2 + \frac{1}{2} \|\tilde{B} A\|_F^2 + (d - r) \log \tilde \eps + \frac{1}{2} \sum_i \left(1 - \log \frac{1}{\|\tilde A_i\|^2 + \tilde{\epsilon}^2}\right).
\end{align*}
\end{proof}
Taking advantage of this simplified formula, we can then identify (for the original loss $L$) simple sufficient conditions on $\tilde{A},\tilde{B}$ which ensure they can be used to approach the population loss minimum by picking suitable $\tilde{\epsilon}_t,\tilde D_t$ and prove matching necessary conditions.

\begin{theorem}\label{thm:landscape-sufficient}
First, suppose that $\tilde{A} : d \times r,\tilde{B} : r \times d$ are fixed matrices such that $A = \tilde{A} \tilde{B} A$ and suppose that $\# \{ i : \tilde A_i = 0 \} > r - d$, i.e. the number of zero columns of $\tilde{A}$ is strictly larger than $r - d$. Then for any sequence of positive $\tilde{\epsilon}_t \to 0$ there exist a sequence of positive diagonal matrices $\tilde D_t$ such that:
\begin{enumerate}
    \item For every $i$ such that $\tilde{A}_i \ne 0$, i.e. column $i$ of $\tilde{A}$ is nonzero, we have $(\tilde D_t)_{ii} \to 0$.
    \item 
    $\lim_{t \to \infty} L(\tilde{A},\tilde{B},\tilde{D}_t,\tilde{\epsilon_t}) = -\infty$.
\end{enumerate} 
Conversely, suppose that that $\tilde{A}_t,\tilde{B}_t,\tilde{D}_t,\tilde{\epsilon}_t$ is an arbitrary sequence such that
$\lim_{t \to \infty} L(\tilde{A}_t,\tilde{B}_t,\tilde{D}_t,\tilde{\epsilon}_t) = -\infty$.
Then as $t \to \infty$, we must have that:
\begin{enumerate}
    \item $\tilde \epsilon_t \to 0$ and $\|A - \tilde A_t \tilde B_t A\|_F^2 \to 0$. 
    \item  $\max_i (\tilde D_t)_{ii} \|(\tilde A_t)_i\|_F^2 \to 0$ where $(\tilde A_t)_i$ denotes the $i$-th column of $\tilde A_t$.
    \item For any $\delta > 0$, $\lim\inf_{t \to \infty} \#\{ i: \| (\tilde A_t)_i\|_2^2 < \delta \} > r - d$, i.e. asymptotically  $\tilde A_t$ has strictly more than $r - d$ columns arbitrarily close to zero. 
\end{enumerate}
In particular, if $(\tilde A_t, \tilde B_t, \tilde D_t, \tilde \epsilon_t)$ converge to a point $(\tilde A, \tilde B, \tilde D, \tilde \epsilon)$ then $\tilde \epsilon = 0$, $A = \tilde A \tilde B A$, $\tilde D_{ii} = 0$ for every $i$ such that $\tilde{A}_i \ne 0$, and $\# \{ i : \tilde A_i = 0 \} > r - d$.
\end{theorem}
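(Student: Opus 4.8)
The plan is to route everything through the simplified loss $L_1$ of Lemma~\ref{lem:optimal-D}, using the identity $L_1 = \min_{\tilde D} L \le L$, with equality attained at the explicit minimizer $(\tilde D)_{ii} = \tilde\epsilon^2/(\|\tilde A_i\|^2 + \tilde\epsilon^2)$. The sufficiency direction then reduces to reading off the $\tilde\epsilon$-asymptotics of the closed form \eqref{eqn:L_1}, and the necessity direction to lower-bounding \eqref{eqn:L_1} term by term.

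\textbf{Sufficiency.} First I would substitute $A = \tilde A \tilde B A$ into \eqref{eqn:L_1}, which kills the reconstruction term and makes $\tfrac12\|\tilde B A\|_F^2$ a fixed constant. Splitting $\sum_i \log(\|\tilde A_i\|^2 + \tilde\epsilon^2)$ into the $m := \#\{i : \tilde A_i = 0\}$ zero columns (each contributing $2\log\tilde\epsilon$) and the $r-m$ nonzero columns (each contributing $\log(\|\tilde A_i\|^2 + \tilde\epsilon^2) \to \log\|\tilde A_i\|^2$), one gets $L_1(\tilde A,\tilde B,\tilde\epsilon_t) = (d - r + m)\log\tilde\epsilon_t + O(1)$, and since $m > r - d$ the coefficient $d-r+m$ is strictly positive, so $L_1 \to -\infty$ as $\tilde\epsilon_t \to 0$. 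Taking $\tilde D_t$ to be the minimizer from Lemma~\ref{lem:optimal-D} gives $L(\tilde A,\tilde B,\tilde D_t,\tilde\epsilon_t) = L_1(\tilde A,\tilde B,\tilde\epsilon_t) \to -\infty$, and for nonzero columns $(\tilde D_t)_{ii} = \tilde\epsilon_t^2/(\|\tilde A_i\|^2 + \tilde\epsilon_t^2) \to 0$ since the numerator vanishes while the denominator stays bounded below; this gives claims 1 and 2.

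\textbf{Necessity.} For claims 1 and 2 I would invoke Lemma~\ref{lem:epsilon-must-zero} together with the exact reconstruction error in the linear case: with $x = Az$, $f(z) = \tilde A z$, $g(x) = \tilde B x$ and $z \sim N(0,I_{r^*})$, $z' \sim N(0,I_r)$ independent, one computes $\E\|x - f(g(x) + D^{1/2}z')\|^2 = \|A - \tilde A\tilde B A\|_F^2 + \sum_i \tilde D_{ii}\|\tilde A_i\|^2$, so Lemma~\ref{lem:epsilon-must-zero} forces $\tilde\epsilon_t \to 0$, $\|A - \tilde A_t \tilde B_t A\|_F^2 \to 0$, and $\sum_i (\tilde D_t)_{ii}\|(\tilde A_t)_i\|^2 \to 0$ (the latter dominating $\max_i$). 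For claim 3, assume toward a contradiction that for some $\delta \in (0,1]$ a subsequence has at most $r-d$ columns with $\|(\tilde A_t)_i\|^2 < \delta$; then at least $d$ columns satisfy $\|(\tilde A_t)_i\|^2 \ge \delta$. Dropping the two nonnegative terms of \eqref{eqn:L_1} and bounding $\log(\|\tilde A_i\|^2 + \tilde\epsilon^2) \ge \log\delta$ on the large columns and $\ge 2\log\tilde\epsilon_t$ on the (at most $r-d$) remaining ones yields
\[ L(\tilde A_t,\tilde B_t,\tilde D_t,\tilde\epsilon_t) \ge L_1(\tilde A_t,\tilde B_t,\tilde\epsilon_t) \ge (d-r)\log\tilde\epsilon_t + (r-d)\log\tilde\epsilon_t + \tfrac{r}{2}\bigl(1 + \log\delta\bigr), \]
whose right side is a constant, contradicting $L \to -\infty$ (the case $r < d$ is vacuous, since then $\#\{\cdot\} \ge 0 > r-d$). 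Finally the ``in particular'' statement follows by taking limits: continuity of $\tilde A\tilde B A$ and of the column norms gives $\tilde\epsilon = 0$, $A = \tilde A\tilde B A$, and $\tilde D_{ii} = 0$ whenever $\tilde A_i \ne 0$; choosing $\delta < \tfrac12\min\{\|\tilde A_i\|^2 : \tilde A_i \ne 0\}$ shows the indices counted in claim 3 are eventually exactly $\{i : \tilde A_i = 0\}$, hence $\#\{i : \tilde A_i = 0\} > r-d$.

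\textbf{Main obstacle.} The substitutions and limits are routine; the one step needing genuine care is claim 3 of the converse, where the bound must be arranged so that the $(d-r)\log\tilde\epsilon_t$ term of $L_1$ and the $(r-d)\log\tilde\epsilon_t$ contributed by the small columns cancel exactly, leaving a finite lower bound. Making the counting of ``large'' versus ``small'' columns interact correctly with the signs of $\log\delta$ and $\log\tilde\epsilon_t$ (hence the reductions $\delta \le 1$ and $\tilde\epsilon_t$ eventually $\le 1$) is the part most prone to error.
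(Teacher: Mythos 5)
Your proposal is correct and follows essentially the same route as the paper: sufficiency via substituting $A=\tilde A\tilde B A$ into the closed form of $L_1$ and counting the $\log\tilde\epsilon$ contributions of the zero columns, and necessity via Lemma~\ref{lem:epsilon-must-zero} together with the exact linear reconstruction error $\|A-\tilde A\tilde B A\|_F^2+\sum_i\tilde D_{ii}\|\tilde A_i\|^2$, plus the bound $L\ge L_1$ and the $\delta$-threshold cancellation $(d-r)\log\tilde\epsilon_t+(r-d)\log\tilde\epsilon_t=0$ for claim 3. The only (immaterial) looseness is the constant $\tfrac{r}{2}(1+\log\delta)$, which is a valid lower bound only when $1+\log\delta\le 0$; since any $t$-independent constant suffices and one may shrink $\delta$ without loss of generality, this does not affect the argument.
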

\begin{proof}[Proof of Theorem~\ref{thm:landscape-sufficient}]
First we prove the sufficiency direction, i.e. that if $A = \tilde{A} \tilde{B} A$ and there exists $i$ such that $\tilde{A}_i = 0$ then we show how to drive the loss to $-\infty$.
By Lemma~\ref{lem:optimal-D}, if we make the optimal choice of $D$ (which clearly satisfies the conditions on $D$ described in the Lemma) the objective simplifies to 
\begin{align*} L_1(\tilde{A},\tilde{B},\tilde{\epsilon})  
&= \frac{1}{2 \Tilde{\epsilon}^2} \|A - \tilde{A} \tilde{B} A\|_F^2 +   \frac{1}{2} \|\tilde{B} A\|_F^2 + (d - r) \log \tilde \eps + \frac{1}{2} \sum_i \left(1 + \log \left(\|\tilde A_i\|^2 + \tilde{\epsilon}^2\right)\right) \\
&=    \frac{1}{2} \|\tilde{B} A\|_F^2 + (d - r) \log \tilde \eps + \frac{1}{2} \sum_i \left(1 + \log \left(\|\tilde A_i\|^2 + \tilde{\epsilon}^2\right)\right)
\end{align*}
where in the second line we used the assumption $A = \tilde{A} \tilde{B} A$. 
Note that for each zero column $\tilde{A}_i = 0$ we have $(1/2) \log (\|\tilde{A}_i\|^2 + \tilde \epsilon^2) = \log \tilde \epsilon$ so the objective will go to $-\infty$ provided $(d - r + \#\{i : \tilde{A}_i = 0\}) \log \tilde \epsilon \to -\infty$. Since $\tilde{\epsilon} \to 0$ this is equivalent to asking $d - r + \#\{i : \tilde{A}_i = 0\} > 0$, which is exactly the assumption of the Theorem. 

Next we prove the converse direction, i.e. the necessary conditions. Note: we split the first item in the lemma into two conclusions in the proof below (so there are four conclusions instead of three). 
The first conclusion follows from the first conclusion of Lemma~\ref{lem:epsilon-must-zero}. The second conclusion of Lemma~\ref{lem:epsilon-must-zero} tells us that 
\[ 0 = \lim_{t \to \infty} \E_{z \sim N(0,I)} \E_{z' \sim N(0,I_{\tilde r})} \|A z - \tilde A_t (\tilde B_t A z + \tilde D_t^{1/2} z')\|^2 = \lim_{t \to \infty} \|A - \tilde{A}_t \tilde{B}_t A\|_F^2 + \|\tilde A_t \tilde D_t^{1/2}\|_F^2  \]
which gives us the second and third conclusions above. For the fourth conclusion, since $L_1(\tilde{A}_t,\tilde{B}_t,\tilde{D}_t) \le L(\tilde{A}_t,\tilde{B}_t,\tilde{D}_t,\tilde{\epsilon}_t)$ we know that
$\lim_{t \to \infty} L_1(\tilde{A}_t,\tilde{B}_t,\tilde{D}_t) = -\infty$
and recalling
\[L_1(\tilde{A},\tilde{B},\tilde{\epsilon})  = \frac{1}{2 \Tilde{\epsilon}^2} \|A - \tilde{A} \tilde{B} A\|_F^2 +   \frac{1}{2} \|\tilde{B} A\|_F^2 + (d - r) \log \tilde \epsilon +  \frac{1}{2} \sum_i \left(1 + \log \left(\|\tilde A_i\|^2 + \tilde{\epsilon}^2\right)\right) \]
we see that, because the first two terms are nonnegative, this is possible only if the sum of the last two terms goes to $-\infty$. Based on similar reasoning to the sufficiency case, this is only possible if strictly more than $r - d$ of the columns of $(\tilde A_t)$ become arbitrarily close to zero; precisely, if there exists $\delta$ such that at most $r - d$ of the columns of $\tilde A_t$ have norm less than $\delta$, then
\begin{align*} 
&(d - r) \log \tilde \epsilon +  \frac{1}{2} \sum_i \left(1 + \log \left(\|\tilde A_i\|^2 + \tilde{\epsilon}^2\right)\right) \\
&\ge \frac{1}{2} \sum_{i : \|\tilde{A}_i\| \ge \delta} \left(1 + \log \left(\|\tilde A_i\|^2 + \tilde{\epsilon}^2\right)\right) \\
&\ge \frac{1}{2} \sum_{i : \|\tilde{A}_i\| \ge \delta} \left(1 + \log \left(\delta^2 + \tilde{\epsilon}^2\right)\right)
\end{align*}
which does not go to $-\infty$ as $\tilde{\epsilon} \to 0$ (and the other terms of $L_1$ are nonnegative). 
\end{proof}
\subsection{Nonlinear VAE}
In this section, we give a simple example of a nonlinear VAE architecture which can represent the ground truth distribution perfectly, but has another asymptotic global minimum where it outputs data lying on a manifold of a larger dimension ($r^* + s$ instead of $r^*$ for any $s \ge 1$). The ground truth model is a one-layer network (``sigmoid dataset'' in Section~\ref{sec:simulations}) and the bad decoder we construct outputs a standard Gaussian in $r^* + s$ dimensions padded with zeros. (Note: in the notation of Section~\ref{sec:simulations} we are considering $a^*$ with 0/1 entries, but the proof generalizes straightforwardly for arbitrary $a^*$ with the correct support.)

\paragraph{Setup:} Suppose $s \ge 1$ is arbitrary and the ground truth $x \in \mathbb{R}^d$ with $d > r^* + s$ is generated in the following way: $(x_1,\ldots,x_{r^*}) \sim N(0,I_{r^*})$, $x_{r^* + 1} = \sigma(x_1 + \cdots + x_{r^*})$ for an arbitrary nonlinearity $\sigma$, and $x_{r^* + 2}= \dots = x_d = 0$. Furthermore, suppose the architecture for the decoder with latent dimension $r > r^* + 1$ is 
\[ f_{\tilde{A}_1,\tilde{A}_2}(z) := \tilde{A}_1 z + \sigma\left(\tilde{A}_2 z\right) \]
where $\sigma(\cdot)$ is applied as an entrywise nonlinearity,
and the encoder is linear, $g(x) := \tilde{B} x$. 

Observe that the ground truth decoder can be expressed by taking $\tilde{A}_2$ to have a single nonzero row in position $r + 1$ with entries $(1,\ldots,1,0,\ldots,0)$,
\[ \tilde{A}_1 = \begin{bmatrix} I_{r^*} & 0 \\ 0 & 0 \end{bmatrix}, \quad \tilde{B} = \begin{bmatrix} I_{r^*} & 0 \\ 0 & 0 \end{bmatrix}. \]
where $\tilde{B}$ is a ground truth encoder which achieves perfect reconstruction.

On the other hand, the following VAE different from the ground truth achieves perfect reconstruction:
\begin{equation}\label{eqn:bad-soln}
\tilde{A}_1 = \begin{bmatrix} I_{r^* + s} & 0 \\ 0 & 0 \end{bmatrix}, \quad  \tilde{A}_2 = 0,\quad \tilde{B} = \begin{bmatrix} I_{r^* + 1} & 0 \\ 0 & 0 \end{bmatrix} 
\end{equation}
The output of this decoder is a Gaussian $N\left(0,\begin{bmatrix} I_{r^* + s} & 0 \\ 0 & 0 \end{bmatrix}\right)$, which means it is strictly higher-dimensional than the ground truth dimension $r^*$. (This also means that if we drew the corresponding plot of to Figure~\ref{fig:failed nonlinear} (b) for this model, we would get something that looks just like the experimentally obtained result.)  We prove in the Appendix that it this is an asymptotic global optima:
\begin{theorem}\label{thm:nonlinear-bad}
Let $s \ge 1$ be arbitrary and the ground truth and VAE architecture is as defined as above. For any sequence $\tilde{\epsilon}_t \to 0$, there exist diagonal matrices $\tilde{D}_t$ such that:
\begin{enumerate}
    \item the VAE loss $L(\tilde{A}_1,\tilde{A}_2,\tilde{B},\tilde{D}_t,\tilde{\epsilon_t}) \to -\infty$ where $\tilde{A}_1,\tilde{A}_2,\tilde{B}$ are defined by \eqref{eqn:bad-soln}
    \item The number of coordinates of $\tilde{D}_t$ which go to zero equals $r^* + s$.
\end{enumerate}
\end{theorem}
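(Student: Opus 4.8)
The plan is to reduce this to the sufficiency analysis behind Theorem~\ref{thm:landscape-sufficient}. The point of the construction \eqref{eqn:bad-soln} is twofold: (i) taking $\tilde{A}_2 = 0$ collapses the decoder to the linear map $z \mapsto \tilde{A}_1 z$ (using that the nonlinearity fixes the origin, $\sigma(0)=0$, which is exactly what makes the claimed decoder output the Gaussian $N(0,\cdot)$), and (ii) the block forms of $\tilde{A}_1$ and $\tilde{B}$ are engineered so that $\tilde{A}_1 \tilde{B} x = x$ for every $x$ in the support of the ground-truth distribution. Granting these, the VAE objective \eqref{eqn:vae-objective} at the bad solution becomes, up to an additive constant, exactly the form of the linear objective, and the computation from the proof of Theorem~\ref{thm:landscape-sufficient} carries over essentially verbatim.

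I would carry this out in two steps. \emph{Step 1 (exact reconstruction).} Writing a data point as $x = (x_1,\dots,x_{r^*},\sigma(x_1+\cdots+x_{r^*}),0,\dots,0) \in \R^d$, the block form of $\tilde{B}$ gives $\tilde{B} x = (x_1,\dots,x_{r^*},\sigma(x_1+\cdots+x_{r^*}),0,\dots,0) \in \R^r$, and then (using $s \ge 1$, so coordinates $r^*+2,\dots,r^*+s$ of $\tilde{B}x$ vanish) the block form of $\tilde{A}_1$ gives $\tilde{A}_1 \tilde{B} x = x$. Since $\tilde{A}_2 = 0$, for any positive diagonal $\tilde{D}$ we get $f_{\tilde{A}_1,\tilde{A}_2}(\tilde{B} x + \tilde{D}^{1/2} z') = \tilde{A}_1 \tilde{B} x + \tilde{A}_1 \tilde{D}^{1/2} z' = x + \tilde{A}_1 \tilde{D}^{1/2} z'$, so the expected reconstruction error is $\E_{z'} \|\tilde{A}_1 \tilde{D}^{1/2} z'\|^2 = \Tr(\tilde{A}_1 \tilde{D} \tilde{A}_1^T) = \sum_{i=1}^{r^*+s} \tilde{D}_{ii}$, since the first $r^*+s$ columns of $\tilde{A}_1$ are unit coordinate vectors and the rest are zero. \emph{Step 2 (optimize over $\tilde{D}$, let $\tilde{\epsilon}\to 0$).} Substituting into \eqref{eqn:vae-objective} and absorbing the $\tilde{\epsilon},\tilde{D}$-independent term $\E_x \|\tilde{B} x\|^2/2$ into a finite constant $c$,
\[ L(\tilde{A}_1, \tilde{A}_2, \tilde{B}, \tilde{D}, \tilde{\epsilon}) = c + d \log \tilde{\epsilon} + \frac{1}{2 \tilde{\epsilon}^2} \sum_{i=1}^{r^*+s} \tilde{D}_{ii} + \frac{1}{2} \sum_{i=1}^{r} \big( \tilde{D}_{ii} - \log \tilde{D}_{ii} \big). \]
Given the sequence $\tilde{\epsilon}_t \to 0$, take the coordinatewise minimizer $(\tilde{D}_t)_{ii} = \tilde{\epsilon}_t^2/(1+\tilde{\epsilon}_t^2)$ for $i \le r^*+s$ and $(\tilde{D}_t)_{ii} = 1$ for $i > r^*+s$ (matching Lemma~\ref{lem:optimal-D}, with $\|(\tilde{A}_1)_i\| \in \{0,1\}$). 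Exactly the first $r^*+s$ diagonal entries of $\tilde{D}_t$ then tend to $0$, which is conclusion (2); plugging back in, the loss equals $c' + \big(d-(r^*+s)\big)\log\tilde{\epsilon}_t + o(1)$ for a constant $c'$, and since $d > r^*+s$ by hypothesis the coefficient of $\log\tilde{\epsilon}_t$ is strictly positive, so $L \to -\infty$, which is conclusion (1).

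I do not expect a serious obstacle: the whole content is the bookkeeping in Step 1 — checking that the specific sparsity patterns of $\tilde{A}_1$ and $\tilde{B}$ give exact reconstruction on the manifold and that $\tilde{A}_2 = 0$ removes $\sigma$ from the decoder entirely — after which the computation is identical to the sufficiency half of Theorem~\ref{thm:landscape-sufficient}. The only mild care needed is that $\tilde{D}_t$ must stay strictly positive (it does, by construction) and that $\E_x\|\tilde{B}x\|^2 < \infty$ (true whenever $\sigma$ has at most polynomial growth). The same recipe — pad a valid low-dimensional generator with $s$ spurious ``identity'' latent directions whose encoder images vanish on the data — extends to a general $\sigma$ (fixing the origin) and a general target weight vector $a^*$ with the stated support, which substantiates the remark after the statement that the construction is not special to this example; only Step 1 needs to be re-verified in that generality.
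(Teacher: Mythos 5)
Your proof is correct and follows essentially the same route as the paper's: verify that $\tilde{A}_1\tilde{B}x = x$ on the support of the data and that $\tilde{A}_2=0$ linearizes the decoder, reduce the loss to a function of $\tilde{D}$ and $\tilde{\epsilon}$, optimize $\tilde{D}$ coordinatewise to get $(\tilde{D}_t)_{ii} = \tilde{\epsilon}_t^2/(1+\tilde{\epsilon}_t^2)$ on exactly the first $r^*+s$ coordinates and $1$ elsewhere, and conclude from $d > r^*+s$. You are in fact slightly more careful than the paper in two places: you make explicit the condition $\sigma(0)=0$ needed for $\tilde{A}_2=0$ to remove the nonlinearity from the decoder (the paper uses this implicitly), and your reconstruction term correctly reads $\frac{1}{2\tilde{\epsilon}^2}\sum_{i=1}^{r^*+s}\tilde{D}_{ii}$ where the paper's displayed sum stops at $r^*+1$, which is evidently a typo since the paper then optimizes over all $i \le r^*+s$.
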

\begin{proof}We show how to pick $\tilde{D}_t$ as a function of $\tilde{\epsilon_t}$ and that if $\tilde{\epsilon_t} \to 0$, the loss goes to $-\infty$. From now on we drop the subscripts. 

With these parameters, the VAE loss is
\begin{align*}
&\E_{x \sim p^*} \E_{z' \sim N(0,I_r)}\Big[\frac{1}{2 \tilde \epsilon^2} \|x - f(g(x) + \tilde D^{1/2} z')\|^2 + \|g(x)\|^2/2 \Big] + d \log(\tilde \epsilon)  + \Tr(\tilde D)/2 -  \frac{1}{2} \sum_i \log \tilde D_{ii} \\
&=(1/2\tilde \epsilon^2) \sum_{i = 1}^{r^* + 1} \tilde D_{ii} + E_{x \sim p^*}\Big[\|x_{1:r^* + 1}\|^2/2 \Big] + d \log(\tilde \epsilon)  + \Tr(\tilde D)/2 -  \frac{1}{2} \sum_i \log \tilde D_{ii}.
\end{align*}
Taking the partial derivative with respect to $\tilde{D}_{ii}$ for $i \le r^* + s$ and optimizing gives $0 = (1/\tilde \epsilon^2) + 1 - 1/\tilde{D}_{ii}$
i.e.
\[ \tilde{D}_{ii} = \frac{1}{1 + 1/\tilde \epsilon^2} = \frac{\tilde \epsilon^2}{\tilde \epsilon^2 + 1} \]
and plugging this into the objective gives
\begin{align*}
&(1/2\tilde \epsilon^2) \sum_{i = 1}^{r^* + 1} \tilde D_{ii} + E_{x \sim p^*}\Big[\|x_{1:r^* + 1}\|^2/2 \Big] + d \log(\tilde \epsilon)  + \Tr(\tilde D)/2 -  \frac{1}{2} \sum_i \log \tilde D_{ii} \\
&= (1/2) \sum_{i = 1}^{r^* + 1} \frac{1}{\tilde \epsilon^2 + 1} + E_{x \sim p^*}\Big[\|x_{1:r^* + 1}\|^2/2 \Big] + (d - r^* - s) \log(\tilde \epsilon)  \\
&\qquad + \Tr(\tilde D)/2 + \frac{r^* + 1}{2} \log (1 + \epsilon^2) +  \frac{1}{2} \sum_{i = r^* + 2}^r \log \tilde D_{ii}.
\end{align*}
Setting the remaining $\tilde{D}_{ii}$ to $1$, we see that using $d > r^* + s$ that the loss goes to $-\infty$ provided $\tilde{\epsilon} \to 0$, proving the result.
\end{proof} \section{Implicit bias of gradient descent in Linear VAE} \label{sec: implicit bias}

In this section, we prove that even though the landscape of the VAE loss contains generators with strictly larger support than the ground truth, the gradient flow is \emph{implicitly biased towards low-rank solutions}. We prove this for the simplified loss $L_1(\tilde A, \tilde B, \tilde \epsilon) = \min_{\tilde D} L_1(\tilde A, \tilde B, \tilde \epsilon, \tilde D)$, which makes the calculations more tractable, though we believe our results should hold for the original loss $L$ as well. The main result we prove is as follows:  

\begin{theorem} [Implicit bias of gradient descent]\label{thm:shrinkage}
Let $A : d \times r$ be arbitrary and define $W$ to be the span of the rows of $A$, let $\tilde \Theta(0) = (\tilde{A}(0),\tilde{B}(0),\tilde{\epsilon}(0))$ be an arbitrary initialization and define the gradient flow $\tilde \Theta(t) = (\tilde{A}(t),\tilde{B}(t),\tilde{\epsilon}(t))$ by the differential equation \eqref{eqn:grad-flow}.
with initial condition $\tilde \Theta_0$. If the solution to this equation exists on the time interval $[0,T]$ and satisfies $\max_{t \in [0,T]} \max_j[\|(\tilde{A}_t)_j\|^2 + \tilde{\epsilon}_t^2] \le K$, then for all $t \in [0,T]$ we have
\begin{equation}\label{eqn:sing-val-small} 
\sum_{k = \dim(W) + 1}^d \sigma_k^2(\tilde{A}(t)) \le \|P_{W^{\perp}} \tilde{A}^T(t)\|_F^2 \le e^{-t/K} \|P_{W^{\perp}} \tilde{A}^T(0)\|_F^2 
\end{equation}
where $P_{W^{\perp}}$ is the orthogonal projection onto the orthogonal complement of $W$.
\end{theorem}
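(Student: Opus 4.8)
The strategy is to produce a single scalar Lyapunov functional $\Phi(t)$ that both dominates $\sum_{k=\dim(W)+1}^{d}\sigma_k^2(\tilde A(t))$ and contracts exponentially along the flow, and then integrate. Take $\Phi(t):=\|P_{W^\perp}\tilde A^T(t)\|_F^2$, the squared Frobenius norm of the component of $\tilde A$ orthogonal to $W=\Span A$ (the argument below uses crucially that $P_{W^\perp}A=0$, i.e. that the columns of $A$ span $W$), so that $C(A,\tilde A)=\Phi(0)$. The first inequality in \eqref{eqn:sing-val-small} is a soft min--max estimate: writing $\tilde A^T=P_W\tilde A^T+P_{W^\perp}\tilde A^T$, the first summand has rank at most $\dim W$, so by Weyl's inequality $\sigma_{\dim(W)+j}(\tilde A)=\sigma_{\dim(W)+j}(\tilde A^T)\le\sigma_j(P_{W^\perp}\tilde A^T)$ for every $j\ge 1$; squaring and summing over $j$ gives $\sum_{k=\dim(W)+1}^{d}\sigma_k^2(\tilde A)\le\|P_{W^\perp}\tilde A^T\|_F^2=\Phi$. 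Since the flow is assumed to exist on $[0,T]$, $\Phi$ is $C^1$ there, so by Grönwall's inequality it suffices to establish the differential inequality $\dot\Phi(t)\le-\Phi(t)/K$ on $[0,T]$.

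To do so I would first read $\nabla_{\tilde A}L_1$ off the closed form \eqref{eqn:L_1}: the reconstruction term contributes $\tfrac1{\tilde\epsilon^2}(\tilde A\tilde B A-A)(\tilde B A)^T$, the terms $\tfrac12\|\tilde B A\|_F^2$ and $(d-r)\log\tilde\epsilon$ contribute nothing, and the log-barrier $\sum_i\tfrac12\log(\|\tilde A_i\|^2+\tilde\epsilon^2)$ contributes $\tilde A\Lambda$ with $\Lambda:=\diag\!\big(1/(\|\tilde A_i\|^2+\tilde\epsilon^2)\big)$. The boundedness hypothesis $\max_j[\|(\tilde A_t)_j\|^2+\tilde\epsilon_t^2]\le K$ is precisely the statement that $\Lambda\succeq\tfrac1K I$ along the flow, and this is the only place it is used. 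Differentiating $\Phi$ and substituting $\dot{\tilde A}=-\nabla_{\tilde A}L_1$ expresses $\dot\Phi$ as the sum of a reconstruction contribution and a barrier contribution, each of the form $-2\,\Tr(\,\cdot\,)$.

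The barrier contribution works out to $-2\,\Tr(S\Lambda)$, where $S:=\tilde A^T P_{W^\perp}\tilde A\succeq 0$ is the projected Gram matrix of $\tilde A$ and $\Tr(S)=\Phi$; since $\Lambda-\tfrac1K I\succeq 0$ and the trace of a product of two positive semidefinite matrices is nonnegative, $\Tr(S\Lambda)=\Tr(S(\Lambda-\tfrac1KI))+\tfrac1K\Tr(S)\ge\tfrac1K\Phi$, so this contribution is $\le-\tfrac2K\Phi$. The reconstruction contribution is where the choice of $W$ matters: because $P_{W^\perp}A=0$, the $-A$ term drops out of $P_{W^\perp}(\tilde A\tilde B A - A)$, and what remains collapses to $-\tfrac2{\tilde\epsilon^2}\Tr\!\big(S\,(\tilde B A)(\tilde B A)^T\big)$ — again a negative multiple of the trace of a product of two positive semidefinite matrices, hence $\le 0$. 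Adding the two gives $\dot\Phi\le-\tfrac2K\Phi\le-\tfrac1K\Phi$, and Grönwall then yields $\Phi(t)\le e^{-t/K}\Phi(0)$. (One actually gets $e^{-2t/K}$; the stated $e^{-t/K}$ is the weaker bound, so this is fine.)

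The main obstacle, in my view, is conceptual rather than computational: guessing that the right potential is the projection of $\tilde A$ onto $(\Span A)^\perp$, and then spotting the two features of $L_1$ that make the estimate close — (i) the reconstruction part of $\dot\Phi$ is automatically of the favorable sign (it is a negative multiple of $\Tr(MN)$ for PSD $M,N$, using only $P_{W^\perp}A=0$), so one never has to understand or control the joint dynamics of $\tilde B$ and $\tilde\epsilon$, which a priori could be erratic; and (ii) it is the log-barrier on the column norms, \emph{not} the reconstruction term, that supplies the strict exponential contraction, the boundedness hypothesis entering only to certify $\Lambda\succeq\tfrac1K I$. A minor technical wrinkle is that $\Lambda$ and $P_{W^\perp}$ do not commute, so one cannot naively factor out $\lambda_{\min}(\Lambda)$; the inequality $\Tr(MN)\ge 0$ for positive semidefinite $M,N$ resolves this directly.
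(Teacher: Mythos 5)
Your proof is correct and follows essentially the same route as the paper's: the same Lyapunov function $\|P_{W^{\perp}}\tilde A\|_F^2$, the same two observations that the reconstruction term contributes a favorable sign because $P_{W^{\perp}}A=0$ while the log-barrier term supplies the $1/K$ contraction rate via the boundedness hypothesis, and the same Gr\"onwall step (both arguments in fact yield the stronger rate $e^{-2t/K}$). The only difference is presentational: the paper first rotates via the SVD of $A$ (its rotation-invariance lemma) to reduce to the case where $A$ has zero rows and then argues row by row, whereas you work coordinate-free with $P_{W^{\perp}}$ and traces of products of positive semidefinite matrices, which bypasses that reduction.
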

Towards showing the above result, we 
first show how to reduce to matrices where $A$ has $d - \mbox{dim}(\mbox{rowspan}(A))$ rows that are all-zero.  
To do this, we observe that the linear VAE objective is invariant to arbitrary rotations in the output space (i.e. $x$-space), so the gradient descent/flow trajectories transform naturally under rotations. Thus, we can ``rotate'' the ground truth parameters as well as the training parameters. 

This is formally captured as Lemma \ref{lem:rotation-invariance}. Recall that by the singular value decomposition
$A = U S V^T$
for some orthogonal matrices $U,V$ and diagonal matrix $S$, and rotation invariance in the $x$-space lets us reduce to analyzing the case where $U = I$, i.e. $A = S V^T$. This matrix has a zero row for every zero singular value.
\begin{lemma}[Rotational Invariance of Gradient Descent on Linear VAE]\label{lem:rotation-invariance}
Let $L_{A}(\tilde A, \tilde B, \tilde D, \tilde \epsilon)$ denote the VAE population loss objective \eqref{obj:linear}. Then for an arbitrary orthogonal matrix $U$, we have
\begin{align*} 
L_A(\tilde A, \tilde B, \tilde D, \tilde \epsilon) = L_{UA}(U \tilde A, \tilde B U^T, \tilde D, \tilde \epsilon). 
\end{align*}
Furthermore, 
\[ U \nabla_{\tilde A}  L_A(\tilde A, \tilde B, \tilde D, \tilde \epsilon)  = \nabla_{U \tilde A} L_{UA}(U \tilde A, \tilde B U^{T}, \tilde D, \tilde \epsilon) \]
and
\[ (\nabla_{\tilde B}  L_A(\tilde A, \tilde B, \tilde D, \tilde \epsilon)) U^T = \nabla_{U \tilde B} L_{UA}(U \tilde A, \tilde B U^{T}, \tilde D, \tilde \epsilon). \]
As a consequence, if for any $\eta \ge 0$ we define $(\tilde{A}_1,\tilde{B}_1,\tilde{D}_1,\tilde{\epsilon}_1) = (\tilde{A},\tilde{B},\tilde{D},\tilde{\epsilon}) - \eta \nabla L_A(\tilde{A},\tilde{B},\tilde{D},\tilde{\epsilon})$ then
\[ (U \tilde{A}_1, \tilde{B}_1 U^T, \tilde{D}_1, \tilde{\epsilon}_1) = (U \tilde{A}, \tilde{B} U^T, \tilde{D}, \tilde{\epsilon}) - \eta \nabla_{(U \tilde{A}, \tilde{B} U^T, \tilde{D}, \tilde{\epsilon})} L_{UA}(U \tilde{A}, \tilde{B} U^{T}, \tilde{D}, \tilde{\epsilon}), \]
i.e. gradient descent preserves rotations by $U$.  The same result holds for the gradient flow (i.e. continuous time gradient descent), or replacing everywhere the loss $L$ by the simplified loss $L_1$. 
\end{lemma}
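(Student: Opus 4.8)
The plan is to verify the loss identity by direct substitution into the closed form \eqref{obj:linear}, then derive the gradient identities from the chain rule, and finally obtain the gradient-descent and gradient-flow statements as immediate corollaries. First I would substitute the parameters $(UA; U\tilde A, \tilde B U^T, \tilde D, \tilde\epsilon)$ into \eqref{obj:linear} and check each of the five terms. The reconstruction term becomes $\frac{1}{2\tilde\epsilon^2}\|UA - (U\tilde A)(\tilde B U^T)(UA)\|_F^2 = \frac{1}{2\tilde\epsilon^2}\|U(A - \tilde A \tilde B A)\|_F^2$, which equals $\frac{1}{2\tilde\epsilon^2}\|A - \tilde A \tilde B A\|_F^2$ since the Frobenius norm is invariant under left multiplication by an orthogonal matrix (using $U^T U = I$ to collapse the inner $U^T U$ factors). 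The term $\frac12\|(\tilde B U^T)(UA)\|_F^2 = \frac12\|\tilde B A\|_F^2$ similarly. The terms $d\log\tilde\epsilon$ and $\frac12\sum_i(\tilde D_{ii} - \log\tilde D_{ii})$ do not involve $\tilde A, \tilde B$ at all and are unchanged. Finally, for the column-norm term, column $i$ of $U\tilde A$ is $U\tilde A_i$, so $\|(U\tilde A)_i\|^2 = \|\tilde A_i\|^2$ again by orthogonality; hence $\frac12\sum_i \tilde D_{ii}\|(U\tilde A)_i\|^2/\tilde\epsilon^2 = \frac12\sum_i \tilde D_{ii}\|\tilde A_i\|^2/\tilde\epsilon^2$. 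This establishes $L_A(\tilde A, \tilde B, \tilde D, \tilde\epsilon) = L_{UA}(U\tilde A, \tilde B U^T, \tilde D, \tilde\epsilon)$.

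For the gradient identities, I would differentiate the established identity. Define $\Phi(\tilde A, \tilde B) = (U\tilde A, \tilde B U^T)$, a linear (hence self-differential) map. Then $L_A(\tilde A,\tilde B,\tilde D,\tilde\epsilon) = (L_{UA}\circ\Phi)(\tilde A,\tilde B,\tilde D,\tilde\epsilon)$, and the chain rule gives, for the $\tilde A$-block, $\nabla_{\tilde A} L_A = (D\Phi_{\tilde A})^* \nabla_{U\tilde A} L_{UA}$ where the adjoint of $\tilde A \mapsto U\tilde A$ (with respect to the trace inner product) is $M \mapsto U^T M$. Hence $\nabla_{\tilde A} L_A(\tilde A,\tilde B,\tilde D,\tilde\epsilon) = U^T \nabla_{U\tilde A} L_{UA}(U\tilde A, \tilde B U^T, \tilde D, \tilde\epsilon)$, which rearranges to the claimed $U\nabla_{\tilde A} L_A = \nabla_{U\tilde A} L_{UA}$. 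Likewise the adjoint of $\tilde B \mapsto \tilde B U^T$ is $M \mapsto M U$, giving $\nabla_{\tilde B} L_A(\tilde A,\tilde B,\tilde D,\tilde\epsilon) = (\nabla_{U\tilde B} L_{UA}(U\tilde A, \tilde B U^T, \tilde D, \tilde\epsilon)) U$, which rearranges to $(\nabla_{\tilde B} L_A) U^T = \nabla_{U\tilde B} L_{UA}$. Alternatively, if one prefers not to invoke adjoints abstractly, these follow from writing out the relevant gradient blocks of \eqref{obj:linear} explicitly (each block is a sum of terms of the form $M_1 \tilde A M_2$ or $M_1 \tilde B M_2$ plus the column-norm gradient $\tilde D_{ii}\tilde A_i/\tilde\epsilon^2$) and tracking how the substitution $\tilde A \mapsto U\tilde A$, $\tilde B \mapsto \tilde B U^T$ moves the factor $U$ through; I expect the abstract chain-rule argument to be cleaner.

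The gradient-descent consequence is then purely mechanical: applying the $\tilde A$-update, $(U\tilde A)_1 := U\tilde A - \eta\, U\nabla_{\tilde A} L_A = U\tilde A - \eta\,\nabla_{U\tilde A} L_{UA}$, and the $\tilde B$-update, $(\tilde B U^T)_1 := \tilde B U^T - \eta\,(\nabla_{\tilde B} L_A)U^T = \tilde B U^T - \eta\,\nabla_{U\tilde B} L_{UA}$, while the $\tilde D$ and $\tilde\epsilon$ blocks (whose gradients are manifestly unchanged by the substitution, since those terms of the loss are) update identically; this is exactly the assertion that the pair $(U\tilde A_1, \tilde B_1 U^T, \tilde D_1, \tilde\epsilon_1)$ equals one gradient step of $L_{UA}$ from $(U\tilde A, \tilde B U^T, \tilde D, \tilde\epsilon)$. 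The gradient flow statement follows by taking $\eta \to 0$ (equivalently, differentiating: if $\tilde\Theta(t)$ solves $\dot{\tilde\Theta} = -\nabla L_A(\tilde\Theta)$ then $t \mapsto (U\tilde A(t), \tilde B(t)U^T, \tilde D(t), \tilde\epsilon(t))$ solves $\dot{} = -\nabla L_{UA}(\cdot)$ by the gradient identities, and uniqueness of ODE solutions gives the claim). The statement for $L_1$ is identical because $L_1(\tilde A,\tilde B,\tilde\epsilon) = \min_{\tilde D} L(\tilde A,\tilde B,\tilde D,\tilde\epsilon)$ and the minimization commutes with the substitution (the map $\tilde D \mapsto \tilde D$ is untouched, and by Lemma~\ref{lem:optimal-D} the optimal $\tilde D$ depends only on the column norms $\|\tilde A_i\|$ and $\tilde\epsilon$, which are rotation-invariant), so $L_1$ inherits the same invariance and hence the same gradient identities. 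There is no real obstacle here — the only thing to be careful about is bookkeeping the adjoints/transposes correctly so the factor $U$ lands on the correct side in the $\tilde A$ versus $\tilde B$ identities; this is where a sign or transpose error would creep in, so I would double-check it against the explicit gradient of the reconstruction term.
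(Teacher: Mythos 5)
Your proposal is correct and follows essentially the same route as the paper's proof: direct substitution into the closed form \eqref{obj:linear} using $U^TU=I$ for the loss identity, then the multivariate chain rule (your adjoint computation is exactly the paper's $\nabla_{\tilde A} L_A = U^T\nabla_{U\tilde A}L_{UA}$ step), with the descent, flow, and $L_1$ statements following mechanically. The only cosmetic difference is that you justify the $L_1$ case by commuting the minimization over $\tilde D$ with the substitution, whereas the paper substitutes directly into the closed form of $L_1$; both are immediate.
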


\paragraph{Analysis when $A$ has zero rows.} Having reduced our analysis to the case where $A$ has zero rows, the following key lemma shows that for every $i$ such that row $i$ of $A$ (denoted $A^{(i)}$) is zero, the gradient descent step $-\nabla L$ or $-\nabla L_1$ will be negatively correlated with the corresponding row $\tilde{A}^{(i)}$.
\begin{lemma}[Gradient correlation]\label{lem:loss_zeroing}
If row $i$ of $A$ is zero, then 
\[ \sum_{j=1}^r \tilde{A}_{ij} \frac{\partial L}{\partial \tilde{A}_{ij}} \ge \sum_{j=1}^r \tilde{D}_{jj} \tilde{A}_{ij}^2/\tilde{\epsilon}^2, \qquad  \sum_{j=1}^r \tilde{A}_{ij} \frac{\partial L_1}{\partial \tilde{A}_{ij}} \ge \sum_{j=1}^r \frac{ \tilde{A}_{ij}^2}{\|\tilde{A}_j\|^2 + \tilde{\epsilon}^2}. \]
\end{lemma}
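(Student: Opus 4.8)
The plan is to compute $\partial L/\partial \tilde A_{ij}$ (and, in the same way, $\partial L_1/\partial \tilde A_{ij}$) explicitly, form the contraction $\sum_{j=1}^r \tilde A_{ij}\,\partial L/\partial \tilde A_{ij}$, and show it equals the claimed lower bound \emph{plus} a manifestly nonnegative term coming from the reconstruction loss. The first step is to isolate the $\tilde A$-dependence in \eqref{obj:linear}: only two terms involve $\tilde A$, namely the reconstruction term $F(\tilde A) := \frac{1}{2\tilde\epsilon^2}\|A - \tilde A\tilde B A\|_F^2$ and the term $\frac{1}{2\tilde\epsilon^2}\sum_k \tilde D_{kk}\|\tilde A_k\|^2$; for $L_1$ the latter is replaced by $\frac12\sum_k \log(\|\tilde A_k\|^2 + \tilde\epsilon^2)$ (see \eqref{eqn:L_1}), and $F$ is unchanged.

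The second step handles the regularization term, which is elementary. Differentiating $\frac{1}{2\tilde\epsilon^2}\sum_k \tilde D_{kk}\|\tilde A_k\|^2$ in $\tilde A_{ij}$ gives $\tilde D_{jj}\tilde A_{ij}/\tilde\epsilon^2$, so $\sum_j \tilde A_{ij}$ times this is exactly $\sum_j \tilde D_{jj}\tilde A_{ij}^2/\tilde\epsilon^2$, the right-hand side of the first claimed inequality; the same computation applied to $\frac12\sum_k \log(\|\tilde A_k\|^2 + \tilde\epsilon^2)$ yields $\sum_j \tilde A_{ij}^2/(\|\tilde A_j\|^2 + \tilde\epsilon^2)$, the right-hand side of the second. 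Thus the entire lemma reduces to the single claim that $\sum_{j=1}^r \tilde A_{ij}\,\partial F/\partial \tilde A_{ij} \ge 0$ whenever row $i$ of $A$ is zero.

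The third step proves this claim, and is the only part with any content. Write $R := A - \tilde A\tilde B A$ and $M := \tilde B A$, so $F = \frac{1}{2\tilde\epsilon^2}\|A - \tilde A M\|_F^2$ and hence $\nabla_{\tilde A} F = -\frac{1}{\tilde\epsilon^2} R M^T$. Therefore $\sum_{j} \tilde A_{ij}\,\partial F/\partial \tilde A_{ij} = -\frac{1}{\tilde\epsilon^2}\sum_j \tilde A_{ij}(RM^T)_{ij}$, and rearranging the sums gives $\sum_j \tilde A_{ij}(RM^T)_{ij} = \sum_q R_{iq}(\tilde A M)_{iq}$, i.e. the inner product $\langle R^{(i)}, (\tilde A M)^{(i)}\rangle$ of row $i$ of $R$ with row $i$ of $\tilde A M$. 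Now observe that $\tilde A M = A - R$, so $(\tilde A M)^{(i)} = A^{(i)} - R^{(i)}$; since $A^{(i)} = 0$ by hypothesis, this equals $-R^{(i)}$, and consequently $\langle R^{(i)},(\tilde A M)^{(i)}\rangle = -\|R^{(i)}\|^2$. Hence $\sum_j \tilde A_{ij}\,\partial F/\partial \tilde A_{ij} = \frac{1}{\tilde\epsilon^2}\|R^{(i)}\|^2 \ge 0$, and adding the contribution computed in the second step proves both inequalities (with equality precisely when row $i$ of $\tilde A\tilde B A$ vanishes).

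I do not expect a genuine obstacle: the only nonroutine point is the observation in the third step that the vanishing of row $i$ of $A$ forces the gradient of the reconstruction term, contracted against $\tilde A^{(i)}$, to become the nonnegative quantity $\frac{1}{\tilde\epsilon^2}\|R^{(i)}\|^2$ rather than a term of indefinite sign. Everything else is matrix calculus, and the only things to be careful about are index bookkeeping (rows of $A$ and $\tilde A$ range over $1,\dots,d$, columns of $\tilde A$ over $1,\dots,r$, and $M = \tilde B A$ is $r \times r^*$) and the sign convention $\nabla_{\tilde A}\tfrac12\|A - \tilde A M\|_F^2 = -(A - \tilde A M)M^T$.
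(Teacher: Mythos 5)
Your proposal is correct and follows essentially the same route as the paper's proof: isolate the two $\tilde A$-dependent terms, check that the regularizer contributes exactly the claimed right-hand side, and show that the reconstruction term's contribution is the nonnegative quantity $\tfrac{1}{\tilde\epsilon^2}\|R^{(i)}\|^2 = \tfrac{1}{\tilde\epsilon^2}\|(\tilde A\tilde B A)^{(i)}\|^2$ once $A^{(i)}=0$. The only difference is presentational — you carry out the reconstruction-term computation in matrix form ($\nabla_{\tilde A}F = -\tfrac{1}{\tilde\epsilon^2}RM^T$) where the paper works entrywise — and both arrive at the identical identity.
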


\begin{proof}First we prove the conclusion for the original loss $L$.
Since $(\tilde{A} \tilde{B} A)_{i \ell} = \sum_{j,k} \tilde{A}_{ij} \tilde{B}_{jk} A_{k \ell}$ we have that
\[ \frac{\partial \|A - \tilde{A} \tilde{B} A\|_F^2}{\partial \tilde A_{ij}} = \frac{\partial}{\partial \tilde A_{ij}} \sum_{\ell} \left(A_{i{\ell}} - \sum_{j',k} \tilde{A}_{ij'} \tilde{B}_{j' k} A_{k \ell}\right)^2 = \sum_{\ell} 2 \left(A_{i{\ell}} - \sum_{j',k} \tilde{A}_{ij'} \tilde{B}_{j' k} A_{k \ell}\right) \left(-\sum_k \tilde{B}_{jk} A_{k \ell}\right) \]
and if we know the corresponding row $i$ in $A$ is zero then this simplifies to 
\[ \frac{\partial \|A - \tilde{A} \tilde{B} A\|_F^2}{\partial \tilde A_{ij}} 
= \sum_{\ell} 2 \left(\sum_{j',k} \tilde{A}_{ij'} \tilde{B}_{j' k} A_{k \ell}\right) \left(\sum_k \tilde{B}_{jk} A_{k \ell}\right) \]
which means that
\[ \sum_j \tilde{A}_{ij} \frac{\partial \|A - \tilde{A} \tilde{B} A\|_F^2}{\partial \tilde A_{ij}} = \sum_{\ell} 2 \left(\sum_{j,k} \tilde{A}_{ij} \tilde{B}_{j k} A_{k \ell}\right)^2 = 2 \|(\tilde{A} \tilde{B} A)^{(i)}\|^2 \]
where the notation $A^{(i)}$ denotes row $i$ of matrix $A$. Thus, for this term the  gradient with respect to row $\tilde{A}^{(i)}$ has nonnegative dot product with row $\tilde{A}^{(i)}$.

Also, 
\[ \frac{\partial}{\partial \tilde{A}_{ij}}  (1/2) \sum_i \tilde{D}_{jj} \|\tilde{A}_j\|^2/\tilde{\epsilon}^2 = \tilde{D}_{jj} \tilde{A}_{ij}/\tilde{\epsilon}^2 \]
and so
\[ \sum_j \tilde{A}_{ij} \frac{\partial}{\partial \tilde{A}_{ij}} (1/2) \sum_j \tilde{D}_{j} \|\tilde{A}_j\|^2/\tilde{\epsilon}^2 = \sum_j \tilde{D}_{jj} \tilde{A}_{ij}^2/\tilde{\epsilon}^2 \]
which gives the first result.

For the second result with the simplified loss $L_1$, observe
that 
\[ \frac{\partial}{\partial \tilde{A}_{ij}} \sum_k \log(\|\tilde{A}_k\|^2 + \tilde{\epsilon}^2) = \frac{2 \tilde{A}_{ij}}{\|\tilde{A}_j\|^2 + \tilde{\epsilon}^2} \]
so
\[ \sum_j \tilde{A}_{ij} \frac{\partial}{\partial \tilde{A}_{ij}} \sum_k \log(\|\tilde{A}_k\|^2 + \tilde{\epsilon}^2) = \sum_j \frac{2 \tilde{A}_{ij}^2}{\|\tilde{A}_j\|^2 + \tilde{\epsilon}^2} \]
and the other terms in the loss behave the same in the case of $L$. Including the factor of $1/2$ from the loss function gives the result. 
\end{proof}

The way we use it is to notice that since the negative gradient points towards zero, gradient descent will shrink the size of $\tilde{A}^{(i)}$.
Since the size of the matrix $\tilde{A}$ stays bounded, this should mean that for small step sizes the norm of row $i$ of $\tilde{A}$ shrinks by a constant factor at every step of gradient descent on loss $L_1$. 
We formalize this in continuous time for the gradient flow, i.e. the limit of gradient descent as step size goes to zero: for the special case of Theorem~\ref{thm:shrinkage-intro} in the zero row setting, the corresponding rows of $\tilde{A}$ decay exponentially fast. \begin{lemma}[Exponential decay of extra rows]\label{lem:gradient-flow-zeros}
Let $A$ be arbitrary, and let $\tilde \Theta(0) = (\tilde{A}(0),\tilde{B}(0),\tilde{\epsilon}(0))$ be an arbitrary initialization and define the gradient flow $\tilde \Theta(t) = (\tilde{A}(t),\tilde{B}(t),\tilde{\epsilon}(t))$ to be a solution of the differential equation \eqref{eqn:grad-flow}
with initial condition $\tilde \Theta(0)$. If the solution exists on the time interval $[0,T]$ and satisfies $\max_{t \in [0,T]} \max_j[\|(\tilde{A}(t))_j\|^2 + \tilde{\epsilon}(t)^2] \le K$ for some $K > 0$, then for all $i$ such that row $i$ of $A$ is zero we have
$\|\tilde{A}^{(i)}(t)\|^2 \le e^{-t/K} \|\tilde{A}^{(i)}(0)\|^2$
for all $t \in [0,T]$.
\end{lemma}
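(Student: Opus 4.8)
The plan is to track, for each $i$ with row $A^{(i)} = 0$, the scalar $\phi_i(t) := \|\tilde{A}^{(i)}(t)\|^2 = \sum_{j=1}^r \tilde{A}_{ij}(t)^2$ and show it obeys a linear differential inequality forcing exponential decay. Since $\tilde{A}(t)$ is the $\tilde{A}$-block of a solution of the gradient flow \eqref{eqn:grad-flow}, we have $\frac{d}{dt}\tilde{A}_{ij}(t) = -\frac{\partial L_1}{\partial \tilde{A}_{ij}}(\tilde{\Theta}(t))$, and — using that $L_1$ is smooth on the open set $\{\tilde{\epsilon} > 0\}$, in which the flow remains for $t \in [0,T]$ (part of what ``the solution exists on $[0,T]$'' entails, since the flow is a continuous curve in the domain of $L_1$) — the chain rule gives $\phi_i'(t) = 2\sum_{j=1}^r \tilde{A}_{ij}(t)\,\tfrac{d}{dt}\tilde{A}_{ij}(t) = -2\sum_{j=1}^r \tilde{A}_{ij}(t)\,\tfrac{\partial L_1}{\partial \tilde{A}_{ij}}(\tilde{\Theta}(t))$.

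Next I would invoke the gradient-correlation estimate. Because row $i$ of $A$ is zero, the second inequality of Lemma~\ref{lem:loss_zeroing} applies at every $t \in [0,T]$ and yields $\sum_{j} \tilde{A}_{ij}\,\tfrac{\partial L_1}{\partial \tilde{A}_{ij}} \ge \sum_{j} \tfrac{\tilde{A}_{ij}^2}{\|\tilde{A}_j\|^2 + \tilde{\epsilon}^2}$. The boundedness hypothesis gives $\|\tilde{A}_j(t)\|^2 + \tilde{\epsilon}(t)^2 \le K$ for all $j$, so each summand on the right is at least $\tilde{A}_{ij}^2/K$, and hence $\sum_{j} \tfrac{\tilde{A}_{ij}^2}{\|\tilde{A}_j\|^2 + \tilde{\epsilon}^2} \ge \tfrac{1}{K}\sum_j \tilde{A}_{ij}^2 = \tfrac{\phi_i(t)}{K}$. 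Combining with the previous display, $\phi_i'(t) \le -\tfrac{2}{K}\phi_i(t)$ on $[0,T]$. Equivalently $\tfrac{d}{dt}\!\left(e^{2t/K}\phi_i(t)\right) \le 0$, so $\phi_i(t) \le e^{-2t/K}\phi_i(0) \le e^{-t/K}\phi_i(0)$ for all $t \in [0,T]$; applying this to every $i$ with $A^{(i)} = 0$ gives the lemma (in fact with a slightly faster rate than stated).

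There is no real obstacle here: the conceptual content has already been extracted into Lemma~\ref{lem:loss_zeroing}, whose point is precisely that the negative gradient of $L_1$ in a ``zero row'' of $A$ points inward — has nonnegative inner product with that row — with a quantitative lower bound. The only things that need care are (i) justifying that the ODE solution stays in the region where $L_1$ is differentiable so the computation of $\phi_i'$ is legitimate, which is exactly the hypothesis that the flow exists (and keeps $\tilde{\epsilon} > 0$) on $[0,T]$, and (ii) the passage from the differential inequality to the exponential bound, which is the one-line integrating-factor argument above. (The analogous statement for the original loss $L$ is more delicate: the rate coming from the first inequality of Lemma~\ref{lem:loss_zeroing} involves $\tilde{D}_{jj}/\tilde{\epsilon}^2$, which need not stay bounded below along the flow — one reason we work with $L_1$.)
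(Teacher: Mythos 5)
Your proof is correct and is essentially the paper's own argument: both differentiate $\|\tilde A^{(i)}(t)\|^2$ along the flow, apply the gradient-correlation bound of Lemma~\ref{lem:loss_zeroing} together with the hypothesis $\|\tilde A_j\|^2 + \tilde\epsilon^2 \le K$, and integrate the resulting linear differential inequality (the paper invokes Gronwall where you use an integrating factor). Your retained factor of $2$ gives the slightly sharper rate $e^{-2t/K}$, which the paper simply discards when stating $e^{-t/K}$.
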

\begin{proof}From Lemma~\ref{lem:loss_zeroing} we have that for any such row $i$,
\begin{align*}
\frac{d}{dt} \|\tilde{A}^{(i)}(t)\|^2 
&= 2\langle \tilde{A}^{(i)}(t), \frac{d}{dt} \tilde{A}^{(i)}(t) \rangle \\
&= 2 \langle \tilde{A}^{(i)}(t), -\nabla_{\tilde{A}(t)^{(i)}} L_1(\Theta_t) \rangle
\le -\sum_{j=1}^r \frac{ (\tilde{A}(t))_{ij}^2}{\|(\tilde{A}(t))_j\|^2 + \tilde{\epsilon}_t^2}
\le -(1/K) \|\tilde{A}^{(i)}(t)\|^2
\end{align*}
which by Gronwall's inequality implies
$\|\tilde{A}^{(i)}(t)\|^2 \le e^{-t/K} \|\tilde{A}^{(i)}(0)\|^2$
as desired.
\end{proof}
Finally, we can use these lemmas to show Theorem \ref{thm:shrinkage}.
\begin{proof}[Proof of Theorem~\ref{thm:shrinkage}]
Before proceeding, we observe that the first inequality in \eqref{eqn:sing-val-small} is a consequence of the general min-max characterization of singular values, see e.g. \cite{horn2012matrix}. We now prove the rest of the statement. 

As explained at the beginning of the section, we start by taking the Singular Value Decomposition $A = U S V^T$ where $S$ is the diagonal matrix of singular values and $U,V$ are orthogonal. We assume the diagonal matrix $S$ is sorted so that its top-left entry is the largest singular value and its bottom-right is the smallest. Note that this means the first $\dim(W)$ rows of $U$ are an orthonormal basis for $W$. 
 Note that for any time $t$, $\|P_{W^{\perp}} \tilde{A}^T(t)\|_F^2 = \sum_{i = \dim(W) + 1}^d \|(U \tilde{A}(t)^T)_i\|^2$ because the rows $(U_{\dim(W) + 1},\ldots,U_d)$ are an orthonormal basis for $W^{\perp}$. Therefore we have that
\begin{align*} \|P_{W^{\perp}} \tilde{A}^T(t)\|_F^2 
&=\sum_{i = \dim(W) + 1}^d \|(U \tilde{A}(t)^T)_i\|^2 \\
&\le e^{-t/K} \sum_{i = \dim(W) + 1}^d \|(U \tilde{A}(0)^T)_i\|^2  = e^{-t/K} \|P_{W^{\perp}} \tilde{A}^T(0)\|_F^2,
\end{align*}
proving the result,
provided we justify the middle inequality. Define $A^* := U^T A = S V^T$, which has a zero row for every zero singular value of $A$, and apply Lemma~\ref{lem:gradient-flow-zeros} (using that the definition of $K$ is invariant to left-multiplication of $\tilde{A}$ by an orthogonal matrix) and Lemma~\ref{lem:rotation-invariance} to conclude that the rows of $U^T \tilde{A}(t)$, i.e. the columns of $U \tilde{A}(t)^T$, corresponding to zero rows of $A^*$ shrink by a factor of $e^{-t/K}$. This directly gives the desired inequality, completing the proof. 
\end{proof} \section{Simulations}\label{sec:simulations}
In this section, we provide extensive empirical support for the questions we addressed theoretically. In particular we investigate the kinds of minima VAEs converge to when optimized via gradient descent over the course of training. 

\paragraph{Linear VAEs:}
First, we investigate whether linear VAEs are able to find the correct support for a distribution supported over a linear subspace. 
The setup is as follows. We choose a ground truth linear transformation matrix $A$ by concatenating an $r^* \times r^*$ matrix consisting of iid standard Gaussian entries with a zero matrix of dimension $(d-r^*) \times r^*$; the data is generated as $Az, z\sim \mathcal{N}(0, I_{r^*})$. Thus the data lies in a $r^*$-dimensional subspace embedded in a  $d$-dimensional space. We ran the experiment with various choices for $r^*, d$ as well as the latent dimension of the trained decoder (Table~\ref{tab:linear}). Every result is the mean over three experiments run with the same dimensionality and setup but a different random seed.

\emph{Results:} From Table \ref{tab:linear} we can see that the optima found by gradient descent capture the support of the manifold accurately across all choices of $d,r^*$, with the correct number of nonzero decoder rows. We also almost always see the correct number of zero dimensions in the diagonal matrix corresponding to the encoder variance. 

However, gradient descent is unable to recover the density of the data on the learned manifold in the linear setting --- in sharp contrast to the full rank case \citep{lucas2019don}. We conclude this by comparing the eigenvalues of the data covariance matrix and the learned generator covariance matrix.
In order to understand whether the distribution on the linear subspace has the right density, we compute the eigenvalue error by forming matrices $X, \hat{X}$ with $n$ rows, for which each row is sampled from the ground truth and learned generator distribution respectively. 
We then compute the vector of eigenvalues $\lambda, \hat{\lambda}$ for the ground truth covariance matrix $AA^T$ and empirical covariance matrix $(1/n)\hat{X}^T\hat{X}$ respectively and compute the normalized eigenvalue error $||\hat{\lambda} - \lambda|| / ||\lambda||$. In no case does the density of the learned distribution come close to the ground truth.

\paragraph{Eigenvalues of Linear Data.}
As we've discussed, in our linear setting the VAE does not recover the ground truth data density. Since our generative process for ground-truth data is $x = Az$ for a matrix $A$ and $z$ normally distributed, we can characterize the density function by the eigenvalues of the true or estimated covariance matrix. We give figures for the normalized error of these eigenvalues between the learned generator and the ground truth in Table \ref{tab:linear}. 
A concrete example of eigenvalue mismatch for a problem with 6 nonzero dimensions is a ground-truth set of covariance eigenvalues
\[\lambda = \begin{bmatrix} 0.001 & 0.156 & 1.54 & 5.06 & 9.55 & 16.4\end{bmatrix}\]
while the trained linear VAE distribution has covariance eigenvalues
\[\hat{\lambda} = \begin{bmatrix} 0.035 & 0.166 & 1.49 & 4.24 & 5.97 & 7.85\end{bmatrix}.\]
Here, the VAE was easily able to learn the support of the data but clearly is very off when it comes to the structure of the covariances.

\begin{table}[]
    \centering
\begin{tabular}{c|c|c|c|c|c|c|c}
\toprule
Intrinsic Dimension &3 &3 & 6 & 6 & 9 & 9 & 12\\
Ambient Dimension & 12 & 20 & 12 & 20 & 12 & 20 & 20\\
\midrule
Mean \#0's in Encoder Variance & 3.3 & 3.7 & 6 & 6 & 9.3 & 9 & 12\\
Mean \# Decoder Rows Nonzero & 3 & 3 & 6 & 6 & 9 & 9 & 12\\
Mean Normalized Eigenvalue Error & 0.44 & 0.71 & 0.49 & 0.47 & 0.30 & 0.45 &  0.42 \\
\bottomrule
\end{tabular}
\caption{Optima found by training a linear VAE on data generated by a linear generator (i.e. a linearly transformed standard multivariate gaussian embedded in a larger ambient dimension by padding with zeroes) via gradient descent. The results reflect the predictions of Theorem \ref{thm:shrinkage}: the number of nonzero rows of the decoder always match the dimensionality of the input data distribution with no variance while the number of nonzero dimensions of encoder variance is greater than or equal to the nonzero rows. All VAEs are trained with a $20$-dimensional latent space. 
Clearly, the model fails to recover the correct eigenvalues and therefore has a substantially wrong data density function.} 
\label{tab:linear}
\end{table}
\paragraph{Nonlinear Dataset}
\label{s:nonlinexp}
In this section, we investigate whether VAEs are able to find the correct support in nonlinear settings. Unlike the linear setting, there is no ``canonical'' data distribution suited for a nonlinear VAE, so we explore two setups:

\begin{itemize}[leftmargin=*]
\item \emph{Sphere dataset:} The data are generated from the unit sphere concatenated with zero padding at the end. This can be interpreted as a unit sphere embedded in a higher dimensional space.  We used 3 layers of 200 hidden units to parameterize our encoder and decoder networks. 

To measure how well the VAE has learnt the support of the distribution, we evaluate the average of  $(\|\tilde{x}_{:(r^*+1)}\|_2 - 1)^2$, where $\tilde{x}$ are generated by the learnt generator. We will call this quantity \emph{manifold error}. We have also evaluated the \emph{padding error}, which is defined as $\|\tilde{x}_{r^*+2:}\|^2_2$. 
\item \emph{Sigmoid Dataset:} Let $z \sim \mathcal{N}(0, I_{r^*})$, the sigmoid dataset concatenates $z$ with $\sigma(\langle a^*, z\rangle)$ where $a^* \in \mathbb{R}^{r^*}$ is generated according to $\mathcal{N}(0, I_{r^*})$. We add additional zero paddings to embed the generated data in a higher dimensional ambient space. The decoder is parameterized by a nonlinear function $f(z) = \Tilde{A}z + \sigma(\Tilde{C}z)$ and the encoder is parameterized by a linear function $g(x) = \Tilde{B} x$ . The intrinsic dimension of the dataset is $r^*$. 

To measure how well the VAE has learnt the support of the distribution, we evaluate the average of $(\sigma (\langle a^*, \tilde{x}_{:r^*}\rangle) - \tilde{x}_{r^*+1})^2$, where $\tilde{x}$ are generated by the learnt decoder.  We will call this quantity \emph{manifold error}. The \emph{padding error} is defined as similarly as the sphere dataset. 
\end{itemize}

\begin{figure}[!htb]

\minipage{0.5\textwidth}
  \includegraphics[width=\linewidth]{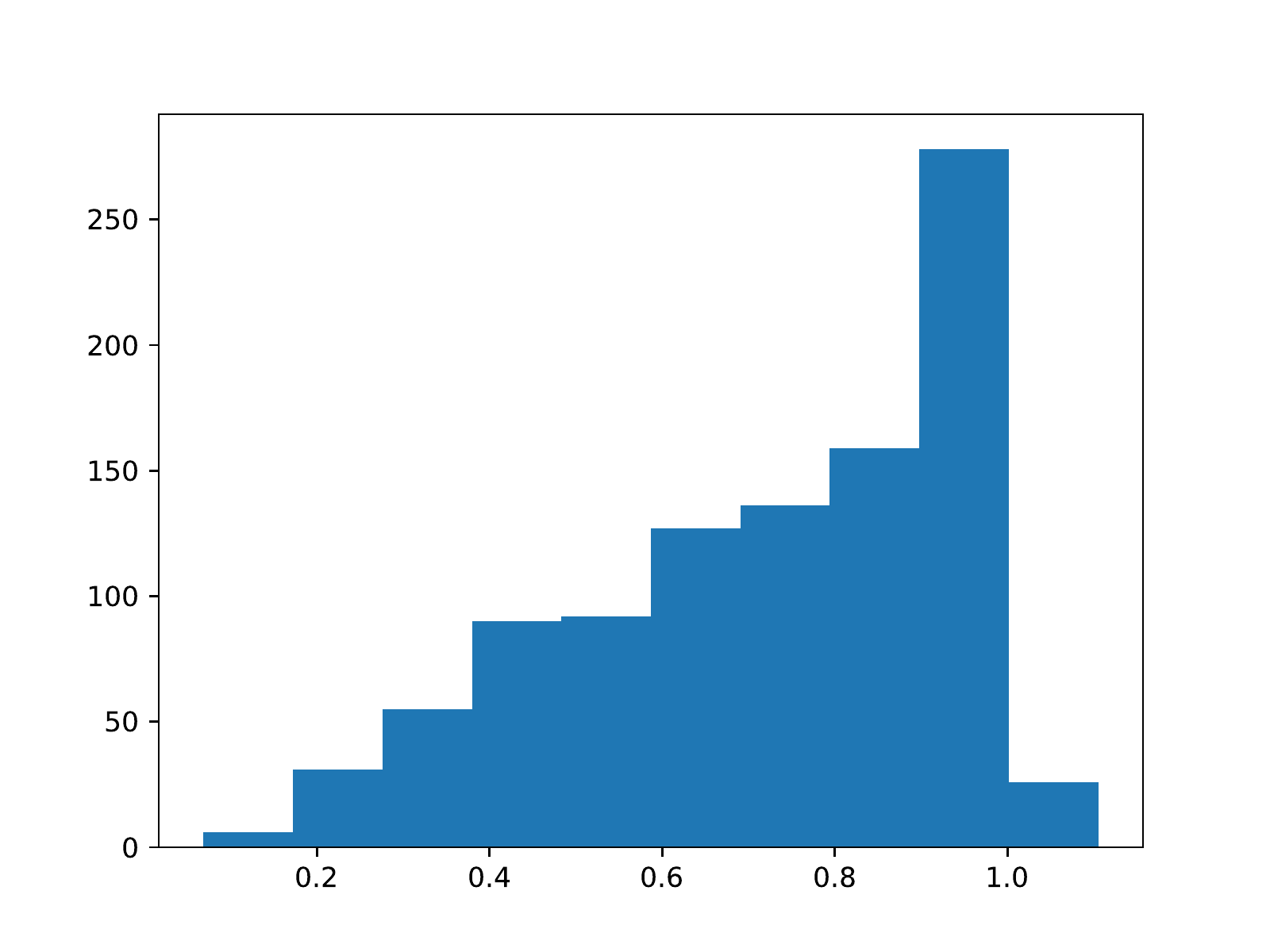}
\endminipage\hfill
\minipage{0.5\textwidth}
  \includegraphics[width=\linewidth]{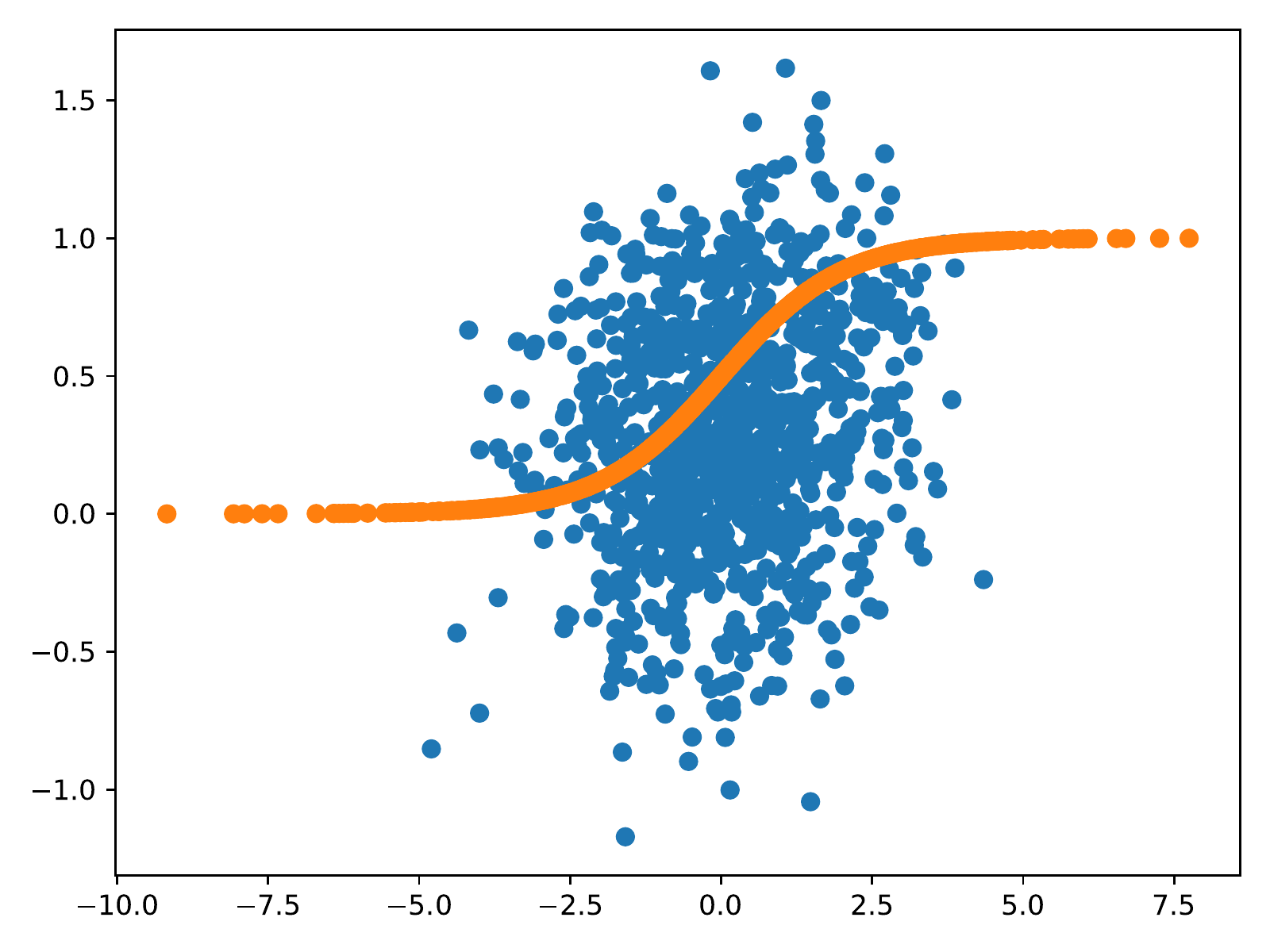}
\endminipage
\caption{A demonstration that in the nonlinear setting (both types of data padded with zeroes to embed in higher ambient dimension, see Setup in Section~\ref{s:nonlinexp}) VAE training does not always recover a distribution with the correct support. \emph{ Left figure:} A histogram of the norms of samples generated from the VAE restricted to the dimensions which are not zero, which shows many of the points have norm less than $1$. (The ground-truth distribution would output only samples of norm 1.) The particular example here is Column 2 in Table \ref{tab:sphere}.
\emph{Right figure:} Two-dimensional linear projection of data output by VAE generator trained on our sigmoid dataset. The $x$-axis denotes $\langle a^*, \tilde{x}_{:r^*}\rangle$ and the $y$-axis is $\tilde{x}_{r^*+1}$, the blue points are from the trained VAE and the orange points are from the ground truth. In contrast to the ground truth data, which satisfies the sigmoidal constraint $x_{r^* + 1} = \sigma(\langle a^*, x_{:r^*} \rangle)$, the trained VAE points do not and instead resemble a standard gaussian distribution. This is a case that closely resembles the example provided in Theorem \ref{thm:nonlinear-bad}. Also similar to Theorem~\ref{thm:nonlinear-bad}, the  VAE model plotted here (from Column 6 in Table \ref{tab:sigmoid}) achieves nearly-perfect reconstruction error, approximately $0.001$.}
\label{fig:failed nonlinear}
\end{figure}

\emph{Results:}  
In both of the nonlinear dataset experiments, we see that the number of zero entries in the diagonal encoder variance is less reflective of the intrinsic dimension of the manifold than the linear dataset. It is, however, at least as large as the intrinsic dimension (Table \ref{tab:sphere}, \ref{tab:sigmoid}). We consider a coordinate to be 0 if it's less than 0.1. We found that the magnitude of each coordinate to be well separated, i.e. the smaller coordinates tend to be smaller than 0.1 and the larger tend to be bigger than 0.5. Thus the threshold selection is not crucial. We did not include padding error in the tables because it reaches zero in all experiments

We show the progression of manifold error, decoder variance and VAE loss during training for the sphere data in Figure \ref{fig:sphere} and for the sigmoid data in Figure \ref{fig:sigmoid}. Datasets of all configurations of dimensions reached close to zero decoder variances, meaning the VAE loss is approaching $-\infty$. To demonstrate Theorem \ref{thm:nonlinear-bad}, we took examples from both datasets to visualize their output. 

For the sphere dataset, we visualize the data generated from the model, with 8 latent dimensions, trained on unit sphere with 2 intrinsic dimensions and 16 ambient dimensions (Column 2 in Table \ref{tab:sphere}). 
Its training progression is shown as the orange curve in Figure \ref{fig:sphere} . We create a histogram of the norm of its first 3 dimensions (Figure \ref{fig:failed nonlinear} (a)) and found that more than half of the generated data falls inside of the unit sphere. The generated data has one intrinsic dimension higher than its training data, despite its decoder variance approaching zero, which is equivalent to its reconstruction error approaching zero by Lemma \ref{lem:eps-mse -equivalent}.

In the sigmoid dataset, the featured model has 24 latent dimension, and is trained on a 7-dimensional manifold embedded in a 28-dimensional ambient space. We produced a scatter plot given 1000 generated data points $\tilde{x}$ from the decoder. The $x$-axis in the Figure \ref{fig:failed nonlinear}(b) is $\langle a^*, \tilde{x}_{:r^*}\rangle$ and the $y$-axis is $\tilde{x}_{r^* +1}$. In contrast to the groundtruth data, whose scatter points roughly form a sigmoid function, the scatter points of the generate data resemble a gaussian distribution. This closely resembles the example provided in Theorem \ref{thm:nonlinear-bad}. Hence, despite its decoder variance and reconstruction error both approaching zero and loss consistently decreasing, the generated data do not recover the training data distribution and the data distribution recovered has higher intrinsic dimensions than the training data. We also investigated the effect of lower bounding the decoder variance as a possible way to improve the VAE performance (details are given in Appendix \ref{sec: clipping}). This enabled the VAE to recover the correct manifold dimension in the sigmoid example, but not the sphere example; methods of improvements to the VAE's manifold recovery is an important direction for future work.

\begin{table}[]
    \centering
    \begin{tabular}{c|c|c|c|c|c|c}
\toprule
Intrinsic Dimensions & 3 &3 & 5 & 5 & 7 & 7\\
Ambient Dimensions & 7 & 17 & 11 & 22 & 15 & 28\\
VAE Latent Dimensions & 6 & 8 & 10 &16 & 13 & 24\\
\midrule
Mean Manifold Error &0.09 &0.13 & 0.23 & 0.24 & 0.18 & 0.28 \\
Mean \#0's in Encoder Variance & 3 & 3.6  & 6 & 6.3 & 7.3 & 8\\
\bottomrule
\end{tabular}
    \caption{
    Optima found by training a VAE on the sigmoid dataset. The VAE training consistently yields encoder variances with number of 0 entries greater than or equal to the intrinsic dimension. }
\label{tab:sigmoid}
\end{table}

\begin{figure}[]
  \includegraphics[width=\linewidth]{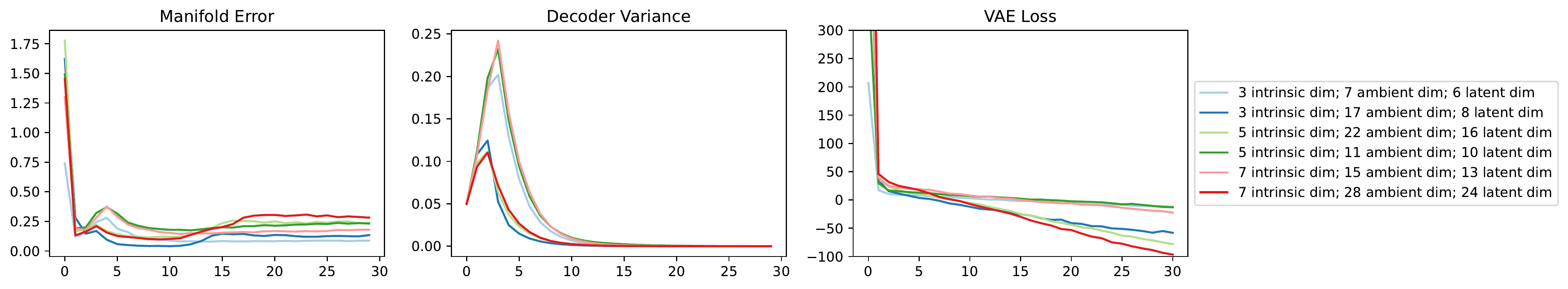}
  \vspace{-5pt}
  \caption{VAE training on 6 datasets with different choices of dimensions for sigmoidal dataset (see Setup in Section~\ref{s:nonlinexp}). The $x$-axis represents every 5000 gradient updates during training. The left-most figure is the manifold error (see Setup in Section~\ref{s:nonlinexp}), The middle and right figure confirms that the decoder variance approaches zero and the VAE loss is steadily decreasing during the finite training time. }
  \vspace{-5pt}
  \label{fig:sigmoid}
\end{figure}

\begin{figure}
  \includegraphics[width=\linewidth]{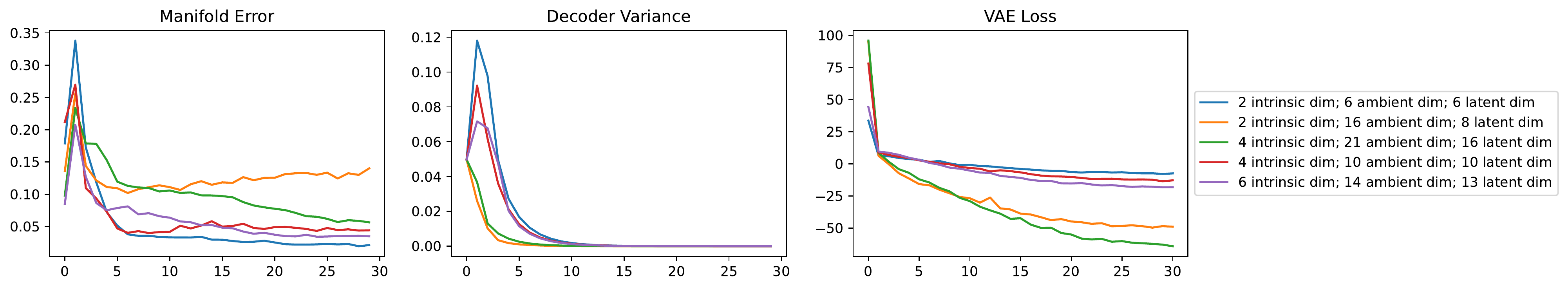}
  \caption{VAE training on 5 datasets generated by appending zeros to uniformly random samples from a unit sphere to embed in a higher dimensional ambient space. The $x$-axis represents each iteration of every 5000 gradient updates. The left-most figure is the manifold error (
see Setup in Section~\ref{s:nonlinexp}), The middle and right figure confirms that the decoder variance approaches zero and the VAE loss is steadily decreasing during the finite training time.}
  \label{fig:sphere}
\end{figure}

\begin{table}
\centering
\begin{tabular}{c|c|c|c|c|c}
\toprule
Intrinsic Dimensions & 2 &2 & 4 & 4 &  6 \\Ambient Dimensions & 6 &16  & 10 & 21 & 14  \\VAE Latent Dimensions & 6 & 8 & 10  & 16 & 13 \\ \midrule
Mean Manifold Error &0.02 & 0.14   & 0.04 & 0.06 & 0.03\\
Mean \#0's in Encoder Variance & 3 & 5  & 5 & 6   &7 \\ \bottomrule
\end{tabular}
    \caption{Optima found by training a VAE on data  generated by padding uniformly random samples from a unit $r^*$-sphere with zeroes, so that the sphere is embedded in a higher ambient dimension. We evaluated  the manifold error as described in the setup. The VAE training on this dataset has consistently yielded encoder variances with number of 0 entries greater than the number of intrinsic dimension.}
    \label{tab:sphere}
\end{table} \vspace{-4mm} \paragraph{Acknowledgements.}
We thank an anonymous reviewer for suggesting experiments where the decoder variance is restricted to be not too small. More details in Appendix \ref{sec: clipping}. VM was supported by DOE grant number DE-SC0021414.
\bibliographystyle{plainnat}
\bibliography{bib}
\newpage
\appendix
\section{Derivations of VAE losses}\label{apdx:loss}
\begin{lemma}
\label{lem:vae_loss}
The VAE loss can be written as:
\begin{align*}
L(f,g,D,\epsilon) 
&:= \E_{x \sim p^*} \E_{z' \sim N(0,I_r)}\Big[\frac{1}{2 \epsilon^2} \|x - f(g(x) + D^{1/2} z')\|^2 + \|g(x)\|^2/2 \Big] \notag \\
&\qquad + d \log(\epsilon)  + \Tr(D)/2 -  \frac{1}{2} \sum_i \log D_{ii}.    
\end{align*}
\end{lemma}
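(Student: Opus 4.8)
The plan is to compute the (negative) evidence lower bound directly from the definitions. Recall that the VAE objective is
$L(f,g,D,\epsilon) = \E_{x \sim p^*}\big[ -\E_{z \sim q(z|x)}[\log p(x|z)] + \mathrm{KL}(q(z|x) \,\|\, p(z)) \big]$,
where $p(z) = N(0,I_r)$, $p(x|z) = N(f(z),\epsilon^2 I_d)$, and $q(z|x) = N(g(x),D)$ with $D$ diagonal and nonnegative. So it suffices to evaluate the reconstruction term and the KL term separately and add them.

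First I would handle the reconstruction term. Plugging in the multivariate Gaussian density for $p(x|z)$ gives $-\log p(x|z) = \frac{1}{2\epsilon^2}\|x - f(z)\|^2 + d\log\epsilon + \frac{d}{2}\log(2\pi)$. Using the reparameterization $z = g(x) + D^{1/2} z'$ with $z' \sim N(0,I_r)$ to rewrite the expectation over $q(z|x)$ as an expectation over a fixed standard Gaussian, this term contributes $\E_{z' \sim N(0,I_r)}\big[\frac{1}{2\epsilon^2}\|x - f(g(x) + D^{1/2}z')\|^2\big] + d\log\epsilon + \frac{d}{2}\log(2\pi)$.

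Next I would expand the KL term using the standard closed form for the KL divergence between two multivariate Gaussians, $\mathrm{KL}(N(\mu_1,\Sigma_1)\,\|\,N(\mu_2,\Sigma_2)) = \frac12\big[\log\frac{\det\Sigma_2}{\det\Sigma_1} - r + \Tr(\Sigma_2^{-1}\Sigma_1) + (\mu_2-\mu_1)^\top\Sigma_2^{-1}(\mu_2-\mu_1)\big]$. Specializing to $\mu_1 = g(x)$, $\Sigma_1 = D$, $\mu_2 = 0$, $\Sigma_2 = I_r$, and using that $D$ is diagonal so $\det D = \prod_i D_{ii}$, this gives $\mathrm{KL}(q(z|x)\,\|\,p(z)) = \frac12\big[-\sum_i \log D_{ii} + \Tr(D) + \|g(x)\|^2 - r\big]$.

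Finally I would add the two pieces, take the expectation over $x \sim p^*$, and discard the additive constant $\frac{d}{2}\log(2\pi) - \frac{r}{2}$, which is independent of the trainable parameters $(f,g,D,\epsilon)$ and hence irrelevant to the optimization; this yields exactly the stated expression. There is no substantive obstacle here — the calculation is routine — but the two points worth flagging explicitly in the write-up are (i) the use of the reparameterization trick so that the inner expectation is over a parameter-free standard Gaussian rather than over the parameter-dependent $q(z|x)$, and (ii) the fact that parameter-independent constants are dropped, which is precisely why the $\log(2\pi)$ and $-r/2$ terms do not appear in the stated formula.
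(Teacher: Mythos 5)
Your proposal is correct and follows essentially the same route as the paper's proof: both expand the negative ELBO for Gaussian $p(x|z)$, $p(z)$, $q(z|x)$, use the reparameterization $z = g(x) + D^{1/2}z'$, and drop the parameter-independent constants $\tfrac{d}{2}\log(2\pi)$ and $-\tfrac{r}{2}$. The only cosmetic difference is that you invoke the closed-form Gaussian KL for the term $\E_{q}[\log q(z|x) - \log p(z)]$, whereas the paper derives the same quantity inline by taking the expectation of the log-densities under the reparameterization; the resulting contributions $\|g(x)\|^2/2 + \Tr(D)/2 - \tfrac12\sum_i \log D_{ii}$ agree.
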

\begin{proof}
We have (for some constants $C_1, C_2, C_3$):
\begin{align*} 
\log p(x | z) &= -\frac{1}{2 \epsilon^2} \|x - f(z)\|^2 - d \log(\epsilon) + C_1 \\
\log p(z) &= - \|z\|^2/2 + C_2 \\
\log q(z | x) &= -\frac{1}{2} \langle z - g(x), D^{-1} (z - g(x) \rangle - \log \sqrt{\det D} + C_3
\end{align*}
where the first line uses $\log \sqrt{\det \epsilon^2 I} = \log \sqrt{\epsilon^{2d}} = d\log(\epsilon)$.
The VAE objective maximizes the expectation of $\log p(x|z) + \log p(z) - \log q(z | x)$ for $x$ from the data $p^*$ and $z \sim q(z | x)$. This means that explicitly the objective is to \emph{maximize}
\begin{align*}
&\E_{x \sim p^*} \E_{z \sim q(z | x)}\left[\log p(x | z) + \log p(z) - \log q(z | x) \right]  - C   \\
&= \E_{x \sim p^*} \E_{z \sim q(z | x)}\left[-\frac{1}{2 \epsilon^2} \|x - f(z)\|^2 - d \log(\epsilon) - \|z\|^2/2 +\frac{1}{2} \langle z - g(x), D^{-1} (z - g(x) \rangle + \log \sqrt{\det D} \right] \\
&= \E_{x \sim p^*} \E_{z' \sim N(0,I_r)}\Big[-\frac{1}{2 \epsilon^2} \|x - f(g(x) + D^{1/2} z')\|^2 - d \log(\epsilon) - \|g(x) + D^{1/2} z'\|^2/2 \\
&\qquad\qquad\qquad\qquad\qquad +\frac{1}{2} \langle z', z' \rangle + \log \sqrt{\det D} \Big]
\end{align*}
which simplifies to (up to additive constant)
\begin{align*}
&\E_{x \sim p^*} \E_{z' \sim N(0,I_r)}\Big[-\frac{1}{2 \epsilon^2} \|x - f(g(x) + D^{1/2} z')\|^2 - \|g(x)\|^2/2 \Big] - d \log(\epsilon)  - \Tr(D)/2 +  \frac{1}{2} \sum_i \log D_{ii}.
\end{align*}
and converting this to minimization form gives the VAE loss given above.
\end{proof}
\paragraph{Linear VAE derivation.}
\begin{lemma}
\label{lem:linear_vae_obj}
For the linear VAE as described in Section \ref{ss:linear_vae}, the VAE loss can be written as
\begin{align}
\label{obj:linear_apdx}   
L(\tilde \Theta) = \frac{1}{2 \Tilde{\epsilon}^2} \|A - \tilde{A} \tilde{B} A\|_F^2 +   \frac{1}{2} \|\tilde{B} A\|_F^2 + d \log \tilde{\epsilon} + \frac{1}{2} \sum_i \left(\tilde D_{ii} \|\tilde A_i\|^2/\tilde{\epsilon}^2 +  \tilde{D}_{ii} - \log \tilde{D}_{ii}\right)
\end{align}
\end{lemma}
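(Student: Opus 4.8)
The plan is to specialize the general VAE loss from Lemma~\ref{lem:vae_loss} to the linear parameterization $f(z) = \tilde A z$, $g(x) = \tilde B x$, together with the data model $x = Az$ for $z \sim N(0,I_{r^*})$, and then evaluate the two expectations appearing in that formula explicitly. The only terms that require computation are the reconstruction term $\frac{1}{2\tilde\epsilon^2}\E\|x - f(g(x) + \tilde D^{1/2} z')\|^2$ and the latent-norm term $\frac12 \E\|g(x)\|^2$; the remaining terms $d\log\tilde\epsilon$, $\Tr(\tilde D)/2$, and $-\frac12\sum_i \log \tilde D_{ii}$ carry over verbatim.

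First I would substitute $x = Az$ to rewrite the reconstruction error as $\|(A - \tilde A\tilde B A)z - \tilde A \tilde D^{1/2} z'\|^2$, where $z \sim N(0,I_{r^*})$ and $z' \sim N(0,I_r)$ are independent. Expanding the square and taking expectations, the cross term vanishes since $z$ and $z'$ are independent and mean zero, leaving $\E\|(A - \tilde A\tilde B A)z\|^2 + \E\|\tilde A\tilde D^{1/2} z'\|^2$. Using $\E[zz^T] = I_{r^*}$ and the identity $\E\|Mv\|^2 = \Tr(MM^T) = \|M\|_F^2$ for $v\sim N(0,I)$, the first expectation equals $\|A - \tilde A\tilde B A\|_F^2$; for the second, $\E\|\tilde A\tilde D^{1/2} z'\|^2 = \Tr(\tilde A\tilde D\tilde A^T) = \sum_i \tilde D_{ii}\|\tilde A_i\|^2$, where $\tilde A_i$ is column $i$ of $\tilde A$ and we use that $\tilde D$ is diagonal. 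Likewise $\E\|\tilde B A z\|^2 = \|\tilde B A\|_F^2$.

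Collecting these pieces gives
\[ L(\tilde\Theta) = \frac{1}{2\tilde\epsilon^2}\Big(\|A - \tilde A\tilde B A\|_F^2 + \sum_i \tilde D_{ii}\|\tilde A_i\|^2\Big) + \frac12\|\tilde B A\|_F^2 + d\log\tilde\epsilon + \frac12\sum_i \tilde D_{ii} - \frac12\sum_i \log \tilde D_{ii}, \]
and regrouping the terms indexed by $i$ produces exactly \eqref{obj:linear_apdx}. There is no substantive obstacle in this argument; the only points requiring care are checking that the $z$–$z'$ cross term integrates to zero (independence plus mean zero) and tracking the trace identity $\Tr(\tilde A\tilde D\tilde A^T) = \sum_i \tilde D_{ii}\|\tilde A_i\|^2$ with the correct (column) indexing convention for $\tilde A$.
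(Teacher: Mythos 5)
Your proposal is correct and follows essentially the same route as the paper's proof: substitute $x = Az$ and the linear maps into the general loss of Lemma~\ref{lem:vae_loss}, use independence of $z$ and $z'$ to kill the cross term, apply $\E\|Mv\|^2 = \|M\|_F^2$, and identify $\|\tilde A \tilde D^{1/2}\|_F^2 = \sum_i \tilde D_{ii}\|\tilde A_i\|^2$. No gaps.
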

\begin{proof}
Plugging in the linear VAE parameters into the loss function, we get \begin{align} L(\Tilde{A},\Tilde{B},\Tilde{D},\Tilde{\epsilon}) 
&:= \E_{x \sim p^*} \E_{z' \sim N(0,I_{\tilde r})}\Big[\frac{1}{2 \Tilde{\epsilon}^2} \|x - \Tilde{A}(\Tilde{B}x + \Tilde{D}^{1/2} z')\|^2 + \|\Tilde{B}x\|^2/2 \Big] \\
&\quad + d \log(\Tilde{\epsilon})  + \Tr(\Tilde{D})/2 -  \frac{1}{2} \sum_i \log \Tilde{D}_{ii}
\end{align}
We can write out the expectation as:
\begin{align*}
&\E_{z \sim N(0,I)} \E_{z' \sim N(0,I_{\tilde r})}\Big[\frac{1}{2 \Tilde{\epsilon}^2} \|A z - \Tilde{A}(\Tilde{B}A z + \Tilde{D}^{1/2} z')\|^2 + \|\Tilde{B}A z\|^2/2 \Big] \\
&= \E_{z \sim N(0,I)} \E_{z' \sim N(0,I_{\tilde r})}\Big[\frac{1}{2 \Tilde{\epsilon}^2} \|(A - \tilde{A} \tilde{B} A) z - \Tilde{A} \Tilde{D}^{1/2} z'\|^2 + \|\Tilde{B}A z\|^2/2 \Big] \\
&= \frac{1}{2 \Tilde{\epsilon}^2} \|A - \tilde{A} \tilde{B} A\|_F^2 + \frac{1}{2 \tilde \epsilon^2} \|\Tilde{A} \Tilde{D}^{1/2}\|_F^2 + \frac{1}{2} \|\tilde{B} A\|_F^2
\end{align*}
where we used that $z,z'$ are independent and the identity $\E_{z \sim N(0,I)} \|M z\|^2 = \langle M M^T, I \rangle = \|M\|_F^2$. Next, we can observe that
\[ \|\tilde A \tilde D^{1/2}\|_F^2 = \sum_i \tilde D_{ii} \|\tilde A_i\|^2 \]
where $\tilde A_i$ is the $i$-th column of the matrix $\tilde A$. Therefore we recover \eqref{obj:linear_apdx}.
\end{proof} \section{Deferred Proofs from Section~\ref{sec:vae-landscape}}
\subsection{General Facts}
\label{s:genfacts_apdx}
\begin{proof}[Proof of Lemma~\ref{lem:epsilon-must-zero}]
For completeness, we include the proof of these claims; they are similar to the proofs of Theorems 4 and 5 in \cite{dai2019diagnosing}.

First, consider the objective for fixed $f,g,D,\epsilon$ and omit the subscript $t$. 
We have
\[  \E_{x \sim p^*} \E_{z' \sim N(0,I_r)}\Big[\frac{1}{2 \epsilon^2} \|x - f(g(x) + D^{1/2} z')\|^2 + \|g(x)\|^2/2 \Big] \ge 0 \]
and
\[ \Tr(D)/2 -  \frac{1}{2} \sum_i \log D_{ii} = \frac{1}{2} \sum_i \left(D_{ii} - \log D_{ii}\right) \ge r/2 \]
from the inequality $x - \log x \ge 1$ for $x \ge 0$. Since these terms are both bounded above, the only way the objective goes to negative infinity is if $d \log \epsilon \to -\infty$ which means $\epsilon \to 0$.

Now that we know $\epsilon_t \to 0$, we claim that $\lim_{t \to \infty} \E_{x \sim p^*} \E_{z' \sim N(0,I_r)} \|x - f_t(g_t(x) + D_t^{1/2} z')\|^2 = 0$. Suppose otherwise: then this for infinitely many $t$ this quantity is lower bounded by some constant $c > 0$, hence the objective for those $t$ is lower bounded by $c/\epsilon^2 + d\log(\epsilon) + r/2$ and this goes to $+\infty$ as $\epsilon \to 0$, instead of $-\infty$.
\end{proof}
\begin{proof}[Proof of Lemma~\ref{lem:eps-mse -equivalent}]
Taking the partial derivative of \eqref{eqn:vae-objective} with respect to $\epsilon$ and setting it to zero gives
\[ 0 = -\frac{1}{\epsilon^3} \E_{x \sim p^*} \E_{z' \sim N(0,I_r)}  \|x - f(g(x) + D^{1/2} z')\|^2 + \frac{d}{\epsilon} \]
and solving for $\epsilon$ gives the result. 
\end{proof}
\section{Deferred Proofs from Section 5}
\label{s:apdxinv}
\begin{proof}[Proof of Lemma~\ref{lem:rotation-invariance}]
We give the proof for $L$ as stated, but it is exactly the same for the simplified loss $L_1$.

From the objective function \eqref{obj:linear} and $U^T = U^{-1}$ observe that
\begin{align*}
    \MoveEqLeft L_{U A} (U \Tilde{A},\Tilde{B} U^T,\Tilde{D},\Tilde{\epsilon}) \\
    &= \frac{1}{2 \Tilde{\epsilon}^2} \|UA - U\tilde{A} \tilde{B} U^{-1} U A\|_F^2 +   \frac{1}{2} \|\tilde{B} U^{-1} U A\|_F^2 + d \log \tilde{\epsilon} + \frac{1}{2} \sum_i \left(\tilde D_{ii} \|U \tilde A_i\|^2/\tilde{\epsilon}^2 +  \tilde{D}_{ii} - \log \tilde{D}_{ii}\right) \\
    &=\frac{1}{2 \Tilde{\epsilon}^2} \|A - \tilde{A} \tilde{B} A\|_F^2 +   \frac{1}{2} \|\tilde{B} A\|_F^2 + d \log \tilde{\epsilon} + \frac{1}{2} \sum_i \left(\tilde D_{ii} \|\tilde A_i\|^2/\tilde{\epsilon}^2 +  \tilde{D}_{ii} - \log \tilde{D}_{ii}\right) \\
    &= L_A(\tilde A, \tilde B, \tilde D, \tilde \epsilon).
\end{align*}
Then from the above and the multivariate chain rule  have
\[ \nabla_{\tilde A}  L_A(\tilde A, \tilde B, \tilde D, \tilde \epsilon) = \nabla_{\tilde A}  L_{UA}(U \tilde A, \tilde B U^T, \tilde D, \tilde \epsilon) = U^T \left(\nabla_{U \tilde A}  L_{UA}(U \tilde A, \tilde B U^{-1}, \tilde D, \tilde \epsilon)\right) \]
so multiplying both sides on the left by $U$ and using $U^T = U^{-1}$ gives the second claim, and similarly
\[ \nabla_{\tilde B}  L_A(\tilde A, \tilde B, \tilde D, \tilde \epsilon) = \nabla_{\tilde B}  L_{U A}(U \tilde A, \tilde B U^T, \tilde D, \tilde \epsilon) = (\nabla_{\tilde B U^T}  L_{U A}(U \tilde A, \tilde B U^T, \tilde D, \tilde \epsilon))U \]
gives the third claim. Then the gradient descent claim follows immediately. 
\end{proof} \section{Experiments with Decoder Variance Clipping} \label{sec: clipping}

As was suggested by an anonymous reviewer, one potential way to evade the results in our paper is to restrict the decoder variance from converging to 0. In this section, we examine (empirically) the impact of clipping the decoder variance during training. We caveat though, that our paper does not analyze the landscape of the resulting constrained optimization problem, so our results don't imply anything about this regime.

 We conduct the same nonlinear experiments described in Section \ref{sec:simulations} where we fit VAEs to data generated from spheres and linear sigmoid functions. The only change is to clip the decoder variance when it goes below a certain threshold. In the figures below, the featured threshold is $e^{-4} \approx 0.018$, though we tried also $e^{-2}$, $e^{-3}$, $e^{-5}$, $e^{-6}$, and $e^{-8}$ with similar outcomes. 
We initialize the decoder variance with $e^{-3}$ for this set of experiments, so the optimization still can decrease it.  

With this change, the optimization process on the sigmoid dataset does yield encoder variances with their number of zeros reflective of their intrinsic dimensions as in Table \ref{tab:sigmoid_clipping}. For the sphere experiment, this still does not happen, as in Table \ref{tab:sphere_clipping}. In fact, the model consistently recovers one more dimension than the true intrinsic dimension of the manifold and the smaller encoder variances can be as large as 0.1. We also provide a figure (Figure \ref{fig:failed nonlinear clipping}) in the same style as  Figure \ref{fig:failed nonlinear}. We see that training with a clipped decoder variance of $e^{-4}$ allows the model to better capture the general shape of the sigmoid function than training without clipping, though the variance of the generated points is high for both of the sphere and sigmoid datasets. Additional experiments with more thresholds are in Figure \ref{fig:sigmoid clipping scatters} and Figure \ref{fig:sphere clipping histograms}. As we decrease the threshold from $e^{-4}$ to $e^{-8}$, the fit of the data points concentrate closer to the groundtruth data before getting further away; at $e^{-8}$, the data distributions for both dataset resemble the unclipped experiments again. Other training details, such as the general trend of manifold error, encoder variance and VAE loss, can be referred to in Figure \ref{fig:sigmoid_clipping} and \ref{fig:sphere_clipping}. 

Overall, the benefit of clipping the decoder variance during training is inconclusive as we see inconsistent results in the sphere and sigmoid datasets. Designing more algorithms to improve the ability of VAE's to recover data supported on a low dimensional manifold is an important direction for future work---both empirical and theoretical. 

\begin{figure}[!htb]

\minipage{0.5\textwidth}
  \includegraphics[width=\linewidth]{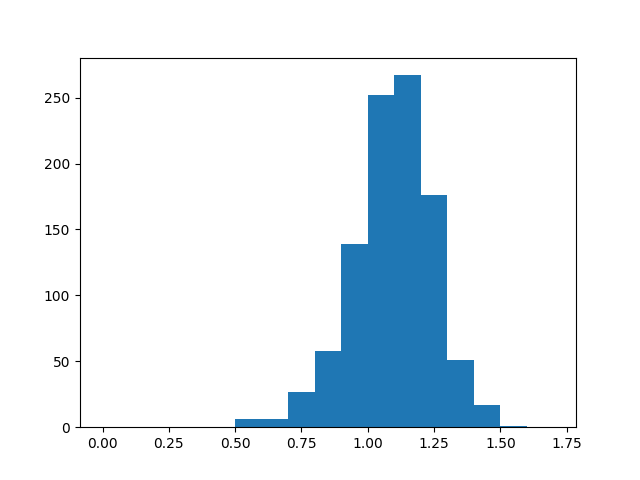}
\endminipage\hfill
\minipage{0.5\textwidth}
  \includegraphics[width=\linewidth]{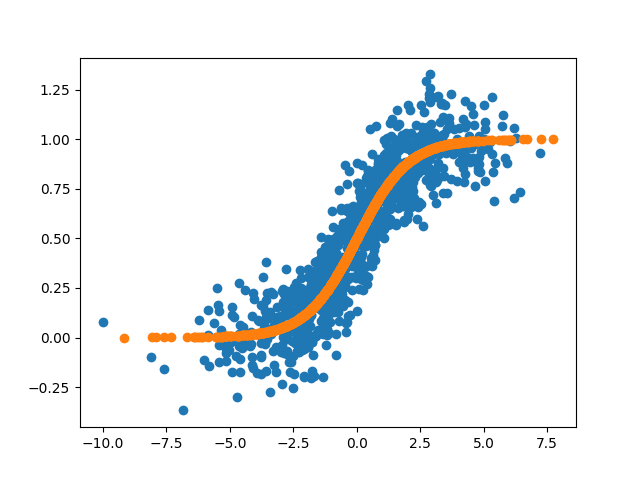}
\endminipage
\vspace{-5pt}
\caption{ A demonstration of how the data points generated by the model trained with clipped decoder variance is distributed. \emph{ Left figure:} A histogram of the norms of samples generated from the VAE restricted to the dimensions which are not zero, which shows many of the points have norm less than $1$. (The ground-truth distribution would output only samples of norm 1.) The particular example here is Column 2 in Table \ref{tab:sphere_clipping}. The data points that do not fall on the sphere tend to lie on both sides of it whereas the those generated without decoder variance clipping tend to lie inside the sphere as in Figure \ref{fig:failed nonlinear}. 
\emph{Right figure:} Two-dimensional linear projection of data output by VAE generator trained on our sigmoid dataset. The $x$-axis denotes $\langle a^*, \tilde{x}_{:r^*}\rangle$ and the $y$-axis is $\tilde{x}_{r^*+1}$, the blue points are from the trained VAE and the orange points are from the ground truth. The generated data points roughly capture the shape of the sigmoid function.}
\label{fig:failed nonlinear clipping}
\end{figure}

\begin{figure}[]
  \includegraphics[width=\linewidth]{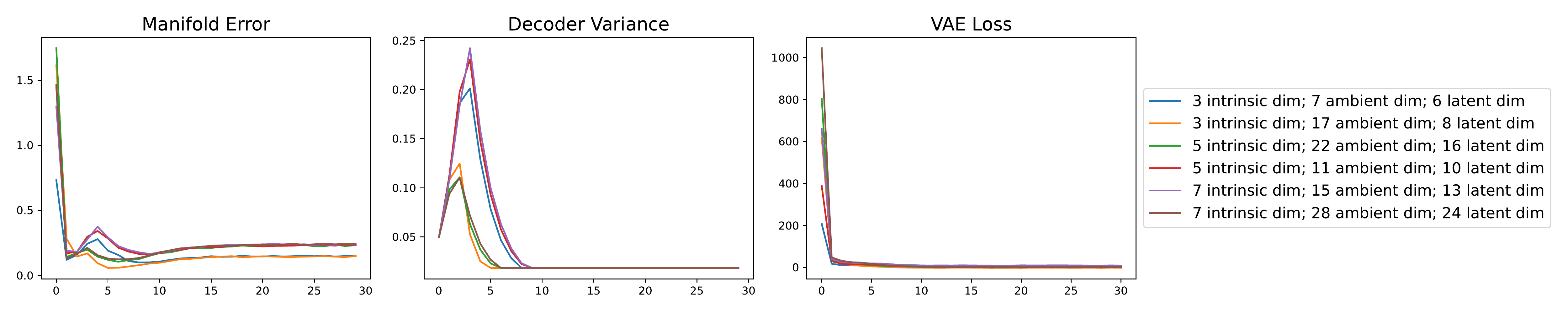}
  \vspace{-5pt}
  \caption{VAE training on 6 datasets with different choices of dimensions for sigmoidal dataset (see Setup in Section~\ref{s:nonlinexp}). The $x$-axis represents every 5000 gradient updates during training. The left-most figure is the manifold error (see Setup in Section~\ref{s:nonlinexp}), The middle and right figure shows that as the decoder variance is bounded below, the VAE loss stops decreasing further. }
  \vspace{-5pt}
  \label{fig:sigmoid_clipping}
\end{figure}

\begin{figure}
  \includegraphics[width=\linewidth]{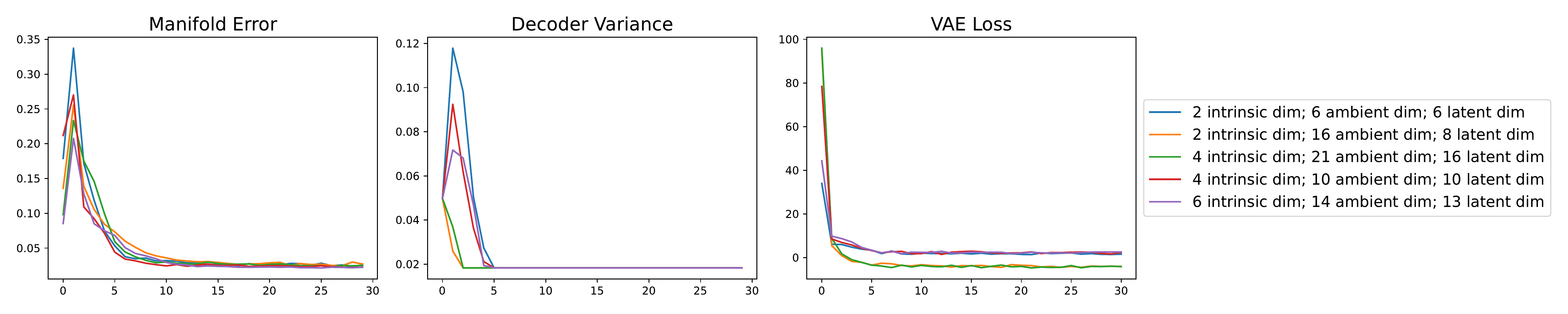}
  \caption{VAE training on 5 datasets generated by appending zeros to uniformly random samples from a unit sphere to embed in a higher dimensional ambient space. The $x$-axis represents each iteration of every 5000 gradient updates. The left-most figure is the manifold error (
see Setup in Section~\ref{s:nonlinexp}), The middle and right figure shows that as the decoder variance is bounded below, the VAE loss stops decreasing further.}
  \label{fig:sphere_clipping}
\end{figure}

\begin{table}[]
    \centering
    \begin{tabular}{c|c|c|c|c|c|c}
\toprule
Intrinsic Dimensions & 3 &3 & 5 & 5 & 7 & 7\\
Ambient Dimensions & 7 & 17 & 11 & 22 & 15 & 28\\
VAE Latent Dimensions & 6 & 8 & 10 &16 & 13 & 24\\
\midrule
Mean Manifold Error &0.15 &0.15 & 0.23 & 0.23 & 0.24 & 0.24 \\
Mean \#0's in Encoder Variance & 3 & 3  & 5 & 5 & 7 & 7\\
\bottomrule
\end{tabular}
\vspace{-5pt}
    \caption{
    Optima found by training a VAE on the sigmoid dataset. The VAE training  yields encoder variances with number of 0 entries equal to the intrinsic dimension. }
\label{tab:sigmoid_clipping}
    \vspace{-5mm}
\end{table}

\begin{table}
\centering
\begin{tabular}{c|c|c|c|c|c}
\toprule
Intrinsic Dimensions & 2 &2 & 4 & 4 &  6 \\Ambient Dimensions & 6 &16  & 10 & 21 & 14  \\VAE Latent Dimensions & 6 & 8 & 10  & 16 & 13 \\ \midrule
Mean Manifold Error &0.03 & 0.03   & 0.03 & 0.02 & 0.02\\
Mean \#0.1's in Encoder Variance & 3 & 3    & 5 & 5   &7 \\
\bottomrule
\end{tabular}
    \caption{Optima found by training a VAE on data  generated by padding uniformly random samples from a unit $r^*$-sphere with zeroes, so that the sphere is embedded in a higher ambient dimension. We evaluated  the manifold error as described in the setup. The VAE training on this dataset has consistently yielded encoder variances with number of 0.1 entries greater than the number of intrinsic dimension.}
    \label{tab:sphere_clipping}
\end{table}
\begin{figure}[!htb] 
\minipage{0.25\textwidth}
  \includegraphics[width=\linewidth]{images/output_149999_sigmoid_lowerbound.png}
\endminipage\hfill
\minipage{0.25\textwidth}
  \includegraphics[width=\linewidth]{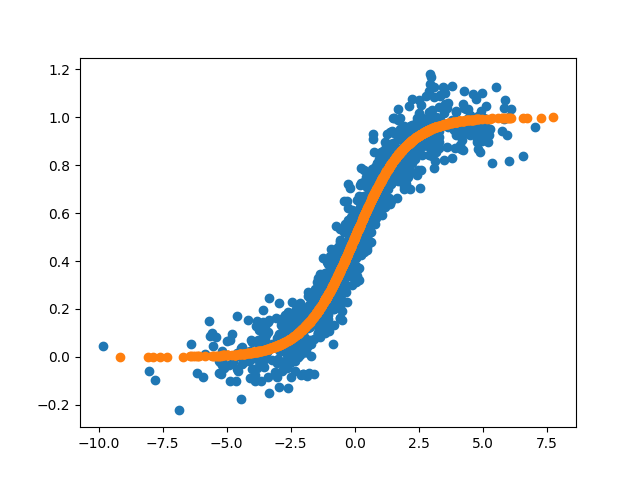}
\endminipage\hfill
\minipage{0.25\textwidth}
  \includegraphics[width=\linewidth]{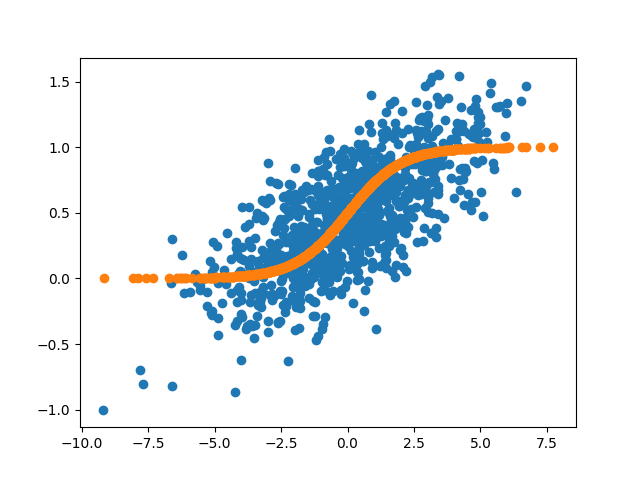}
\endminipage \hfill
\minipage{0.25\textwidth}
  \includegraphics[width=\linewidth]{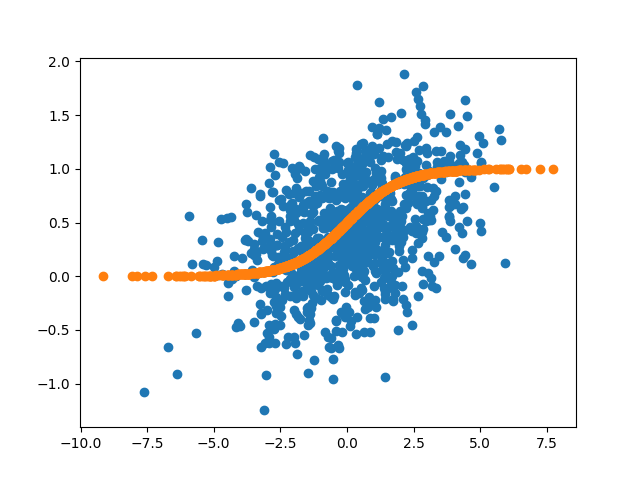}
\endminipage
\vspace{-5pt}
\caption{From left to right are scattered points generated in the same way as in Figure \ref{fig:failed nonlinear clipping}(right) with clipping threshold set at $e^{-4}$, $e^{-5}$, $e^{-6}$ and $e^{-8}$. We notice that the scattered points were able to capture the sigmoidal shape with threshold at $e^{-4}$ and $e^{-5}$. But as the threshold lowers further, the resemblance disappears. Between $e^{-4}$ and $e^{-5}$, it is clear that the smaller threshold leads to a scatter plot more concentrated around the sigmoid function. }

\label{fig:sigmoid clipping scatters}
\end{figure}

\begin{figure}[!htb] 
\minipage{0.25\textwidth}
  \includegraphics[width=\linewidth]{images/output_149999_sphere_lowerbound.png}
\endminipage\hfill
\minipage{0.25\textwidth}
  \includegraphics[width=\linewidth]{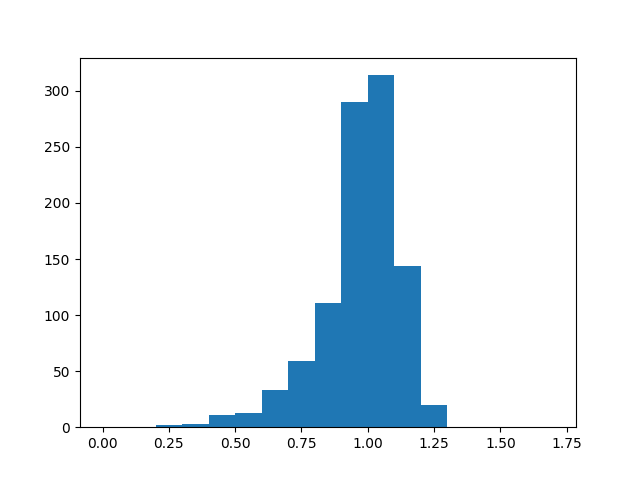}
\endminipage\hfill
\minipage{0.25\textwidth}
  \includegraphics[width=\linewidth]{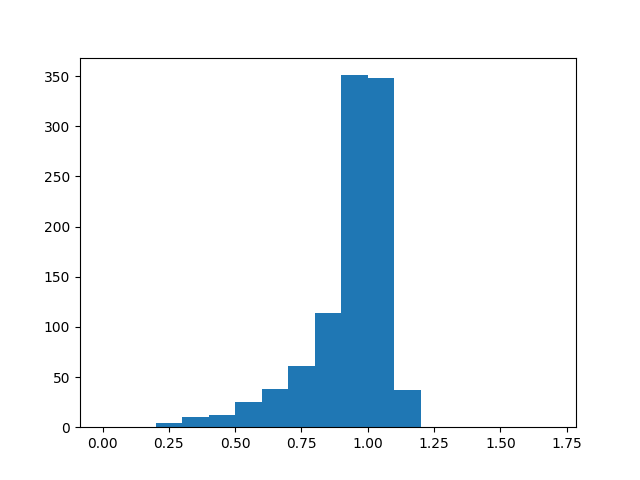}
\endminipage \hfill
\minipage{0.25\textwidth}
  \includegraphics[width=\linewidth]{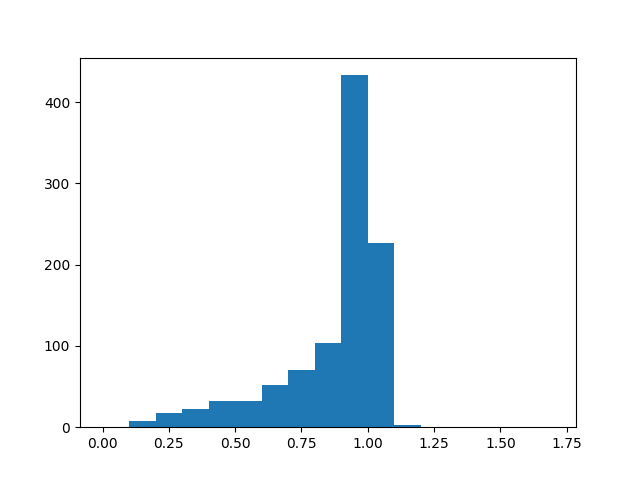}
\endminipage
\vspace{-5pt}
\caption{From left to right are histograms of generated points' distance to the centre of the sphere with clipping threshold set at $e^{-4}$, $e^{-5}$, $e^{-6}$ and $e^{-8}$. As the threshold lowers, the number of points with distance larger than 1 decreases, but the points inside the sphere reach closer to centre.}
\label{fig:sphere clipping histograms}
\end{figure} \end{document}